



\documentclass[11pt]{article}

\setlength{\textheight}{225mm}
\setlength{\textwidth}{165mm}
\setlength{\oddsidemargin}{-5mm}
\setlength{\topmargin}{-5mm}

\usepackage{multirow}
\usepackage{amssymb,amsmath,amsthm}
\usepackage{color}
\usepackage{graphicx}
\usepackage{tikz}
\usepackage{epstopdf}
\usepackage{romannum}
\usepackage{algorithm}
\usepackage{subcaption}
\usepackage{graphicx}
\usepackage{float}
\usepackage{multicol}
\usepackage[noend]{algpseudocode}
\usepackage{romannum}



\definecolor{verm}{rgb}{0.6,0.2,0.2}
\definecolor{purp}{rgb}{0.3,0.1,0.6}
\definecolor{purple}{rgb}{0.4,0.0,0.6}
\definecolor{bggreen}{rgb}{0.1,0.3,0.1}
\definecolor{dgreen}{rgb}{0.1,0.6,0.1}
\definecolor{black}{rgb}{0.0,0.0,0.0}
\definecolor{crim}{rgb}{0.3,0.1,0.1}
\definecolor{dred}{rgb}{0.5,0.1,0.1}


\definecolor{Blue}{cmyk}{0.65,0.13,0,0}
\definecolor{Black}{cmyk}{0,0,0,1}
\definecolor{Red}{cmyk}{0,1,1,0}
\definecolor{Green}{cmyk}{1,0,1,0}
\definecolor{Orange}{cmyk}{0,0.61,0.87,0.1}
\definecolor{Fuchsia}{cmyk}{0.47,0.91,0,0.08}
\definecolor{PineGreen}{cmyk}{0.92,0,0.59,0.25}


\def\1{{\bf 1}}


\def\C{{\cal C}}

\def\N{{\cal N}}



\def\Cbb{{\mathbb C}}
\def\Fbb{{\mathbb F}}

\def\R{{\mathbb R}}


\def\al{\alpha}
\def\d{\delta}
\def\D{\Delta}
\def\e{\epsilon}

\def\l{\lambda}

\def\OM{\Omega}
\def\r{\rho}
\def\s{\sigma}
\def\SI{\Sigma}
\def\t{\tau}
\def\th{\theta}



\def\db{\bar{d}}

\def\kb{\bar{k}}



\def\ap{\rightarrow}

\def\seq{\subseteq}

\def\bi{\{0,1\}}

\def\bimn{\bi^{m \times n}}

\def\bz{{\bf 0}}

\def\fa{\; \forall}

\def\st{\mbox{ s.t. }}

\def\nm{\Vert}

\renewcommand{\and}{\mbox{$\wedge$}}


\newcommand{\bc}{\begin{center}}
\newcommand{\ec}{\end{center}}
\newcommand{\be}{\begin{equation}}
\newcommand{\ee}{\end{equation}}
\newcommand{\bd}{\begin{displaymath}}
\newcommand{\ed}{\end{displaymath}}
\newcommand{\ba}{\begin{array}}
\newcommand{\ea}{\end{array}}
\newcommand{\ben}{\begin{enumerate}}
\newcommand{\een}{\end{enumerate}}
\newcommand{\bit}{\begin{itemize}}
\newcommand{\eit}{\end{itemize}}
\newcommand{\beq}{\begin{eqnarray}}
\newcommand{\eeq}{\end{eqnarray}}
\newcommand{\btab}{\begin{tabular}}
\newcommand{\etab}{\end{tabular}}
\newcommand{\bfig}{\begin{figure}}
\newcommand{\efig}{\end{figure}}
\newcommand{\btp}{\begin{tikzpicture}}
\newcommand{\etp}{\end{tikzpicture}}
\newcommand{\bcm}{}


\newcommand{\argmin}{\operatornamewithlimits{arg min}}



\newcommand{\nmm}[1]{ \nm #1 \nm }
\newcommand{\nmeu}[1]{ \nm #1 \nm_2 }
\newcommand{\nmeusq}[1]{ \nm #1 \nm_2^2 }

\newcommand{\nmp}[1]{ \nm #1 \nm_p }
\newcommand{\nmq}[1]{ \nm #1 \nm_q }

\newcommand{\IP}[2]{ \langle #1 , #2 \rangle }



\def\DBP{\D_{{\rm BP}}}

\def\Cn{\Cbb^n}

\def\Rmn{\R^{m \times n}}

\def\xh{\hat{x}}

\def\nmsl1{\nm_{{\rm SL1}}}

\def\DBP{\D_{{\rm BP}}}
\def\db{\bar{d}}

{\bf}{\it}
\newtheorem{definition}{Definition}{\bf}{\it}
\newtheorem{example}{Example}{\bf}{\rm}
\newtheorem{lemma}{Lemma}{\bf}{\it}
\newtheorem{theorem}{Theorem}{\bf}{\it}
{\bf}{\it}
{\bf}{\it}
{\bf}{\rm}

\begin{document}


\title{
Compressed Sensing Using Binary Matrices of Nearly Optimal Dimensions
}

\author{Mahsa Lotfi and Mathukumalli Vidyasagar
\thanks{ML is with the with the Department of Statistics,
Stanford University.
MV is with the Indian Institute of Technology Hyderabad.
This work was carried out when both authors were with the
the Erik Jonsson School of Engineering and Computer Science,
The University of Texas at Dallas, Richardson, TX 75080, USA,
and forms a part of the first author's doctoral thesis.
Emails: lotfi@stanford.edu; m.vidyasagar@iith.ac.in.
This research was supported by the National Science Foundation, USA under
Award \#ECCS-1306630, and by the Department of Science and Technology,
Government of India.
}
}

\maketitle
\thispagestyle{empty}
\pagestyle{empty}

\begin{abstract}
In this paper, we study the problem of compressed sensing using binary
measurement matrices and $\ell_1$-norm minimization (basis pursuit)
as the recovery algorithm.
We derive new upper and lower bounds on the number of
measurements to achieve robust sparse recovery with binary matrices.
We establish sufficient conditions for a column-regular binary matrix
to satisfy the robust null space property (RNSP)
and show that the
associated sufficient conditions
for robust sparse recovery obtained using the RNSP
are better by a factor of $(3 \sqrt{3})/2 \approx 2.6$ compared to
the sufficient conditions obtained using
the restricted isometry property (RIP).
Next we derive universal \textit{lower} bounds on the number of measurements
that any binary matrix needs to have in order to satisfy the weaker sufficient
condition based on the RNSP and show that bipartite graphs of girth six
are optimal.
Then we display two classes of binary matrices,
namely parity check matrices of array codes and Euler squares,
which have girth six and are nearly
optimal in the sense of almost satisfying the lower bound.
In principle, randomly generated Gaussian measurement matrices are 
``order-optimal.''
So we compare the phase transition behavior of the basis pursuit formulation
using binary array codes
and Gaussian matrices and show that (i) there is essentially
no difference between the phase transition boundaries in the two cases and
(ii) the CPU time of basis pursuit with binary matrices is hundreds of times
faster than with Gaussian matrices and the storage requirements are less.
Therefore it is suggested that binary matrices are a viable alternative to
Gaussian matrices for compressed sensing using basis pursuit.
\end{abstract}

\section{Introduction}\label{sec:Intro}

Compressed sensing refers to the recovery of high-dimensional but
low-complexity entities from a limited number of measurements.
The specific problem studied in this paper is to recover a vector $x \in \R^n$,
where only $k \ll n$ components are significant and the rest are either
zero or small,
based on a set of linear measurements $y = Ax$, where $A \in \Rmn$.
A variant is when $y = Ax + \eta$, where $\eta$ denotes measurement noise
and a prior bound of the form $\nmm{\eta} \leq \e$ is available.
By far the most popular solution methodology for this problem is
\textit{basis pursuit} in which an
approximation $\xh$ to the unknown vector $x$ is constructed via
\be\label{eq:01}
\xh := \argmin_z \nmm{z}_1 \st \nmm{y - Az} \leq \e .
\ee
The basis pursuit approach (with $\eta = 0$ so that the constraint
in \eqref{eq:01} becomes $y = Az$) was proposed in
\cite{Chen-Donoho-Saunders98,Chen-Donoho-Saunders01}, but
without guarantees on its performance.
Much of the subsequent research in compressed sensing
has been focused on the case where $A$
consists of $mn$ independent samples of a zero-mean, unit-variance
Gaussian or sub-Gaussian random variable, normalized by $1/\sqrt{m}$.
With this choice, it is shown in \cite{Candes-Tao05} that,
with high probability with respect to the process of generating $A$,
$m = O(k \ln (n/k))$ measurements suffice to ensure that
$\xh$ defined in \eqref{eq:01} equals $x$, provided $x$ is sufficiently sparse.
It is also known that \textit{any} compressed sensing algorithm
requires $m = \OM(k \ln(n/k))$ samples; see
\cite{Cohen-Dahmen-DeVore09} for an early result and
\cite{BIPW10} for a simpler and more explicit version of this bound.
Thus random Gaussian matrices are ``order optimal'' in the sense that
the number of measurements is within a fixed universal constant of the
minimum required.

In recent times, there has been a lot of interest in the
use of \textit{sparse binary} measurement matrices for compressed sensing.
One of the main advantages of this approach
is that it allows one to connect compressed sensing to fields such as
graph theory and algebraic coding theory.
There are also some computational advantages.
At present, a popular alternative is to choose the measurement matrix $A$ to
consist of $mn$ independent samples of a Gaussian random variable.
A Gaussian random variable is \textit{nonzero} with probability one;
therefore every element of $A$ will be nonzero with probability one.
Moreover, in solving the minimization problem in \eqref{eq:01},
each element of $A$ needs to be stored to high precision.
In contrast, sparse binary matrices require less storage both because
they are sparse and also because every nonzero element equals one.
For this reason, binary matrices are also said to be ``multiplication-free.''
As a result, popular compressed sensing approaches such as
\eqref{eq:01} can be applied effectively
for far larger values of $m$ and $n$
and with greatly reduced CPU time, when $A$ is a sparse
binary matrix instead of a random Gaussian matrix.
Of course, the previous discussion assumes that the unknown vector is
sparse in the canonical basis.
There are situations, where the unknown vector is sparse with respect to
some other basis, such as the Fourier basis.
Our remarks would not apply in such a situation.

At present, the best available bounds for the number of measurements
required by a binary matrix are $m = O(\max \{ k^2 , \sqrt{n} \})$.
This contrasts with $m = O(k \ln (n/k))$ for random Gaussian matrices.
However, in the latter case, the $O$ symbol hides a very large constant.
It is shown in this paper that for
values of $n \lesssim 10^5$, the \textit{known} bounds with
binary matrices are in fact \textit{smaller} than with random Gaussian matrices.
The preceding discussion refers to the case where a particular
matrix $A$ is \textit{guaranteed} to recover
\textit{all} sufficiently sparse vectors.
A parallel approach is to study conditions under which ``most''
sparse vectors are recovered.
Specifically, in this approach, $n,m$ are fixed and $k$ is varied
from $1$ to $m$.
For each choice of $k$, a large number of vectors with exactly $k$
nonzero components are generated at random and the fraction that
is recovered accurately is computed.
Clearly, as $k$ is increased, this fraction decreases.
One might expect that the fraction of recovered randomly generated vectors 
equals $1$ when $k$ is sufficiently small and decreases gradually to $0$
as $k$ approaches $m$.
In reality there is a \textit{sharp boundary} below which almost all $k$-sparse
vectors are recovered and above which almost no $k$-sparse vectors
are recovered.
This phenomenon is known as \textit{phase transition} and
has been established theoretically for the case, where $A$ consists of
random samples from a Gaussian distribution in
\cite{Donoho06b,Donoho-Tanner-PNAS05,Donoho-Tanner-JAMS09}.
A very general theory is derived in \cite{Amelunxen-et-al14},
where the measurement matrix still consists of random Gaussians,
but the objective function is changed from the $\ell_1$-norm to
an arbitrary convex function.
In a recent paper \cite{MJGD13},
phase transitions are studied \textit{empirically}
for several classes of \textit{deterministic} measurement matrices
and it is verified that there is essentially no difference between the phase transitions of
of deterministic
  measurement matrices and
  the phase transitions of
  random Gaussian
  measurement matrices.

Here we describe the organization of the paper, as well as its contributions.
Section \ref{sec:Back} contains background material,
but also includes some improvements over known results.
In particular, we review the current
literature on the construction of binary matrices
for compressed sensing.
The original contributions of the paper begin with Section \ref{sec:RNSP}.
In this section we derive a sufficient condition for a binary matrix to
satisfy the robust null space property (RNSP).
In turn this leads to a new upper bound on the sparsity count $k$ for
which robust sparse recovery can be guaranteed using a column-regular
binary matrix.\footnote{This term is defined in Section \ref{sec:RNSP}.}
In Section \ref{sec:Lower} we derive a lower bound on the number of
measurements $m$ as a function of the girth of the bipartite graph associated
with the measurement matrix; it is shown that graphs of girth six are optimal
in terms of minimizing the number of measurements.
In Section \ref{sec:Girth}, we construct binary matrices of girth six,
where the number of measurements is nearly equal to the lower bound derived
in Section \ref{sec:Lower}; this explains the title of the paper.
In Section \ref{sec:Low}, we attempt to reconcile two seemingly conflicting
observations, namely: For compressed sensing, graphs of girth six are
optimal, whereas in coding, graphs of high girth are preferred.
In Section \ref{sec:num}, we carry out some numerical experiments and 
establish that the basis pursuit approach together with our binary matrices
exhibits a phase transition.
The paper is concluded with some discussion in Section \ref{sec:Disc}.

\section{Background}\label{sec:Back}

\subsection{Definition of Compressed Sensing}\label{ssec:prob}
Let $\SI_k \seq \R^n$ denote the set of $k$-sparse vectors in $\R^n$; i.e.,
\bd
\SI_k := \{ x \in \R^n : \nmm{x}_0 \leq k \} ,
\ed
where, as is customary, $\nmm{\cdot}_0$ denotes the number of nonzero
components of $x$.
Given a norm $\nmm{\cdot}$ on $\R^n$, the \textbf{$k$-sparsity index}
of $x$ with respect to that norm is defined by
\bd
\s_k(x, \nmm{\cdot}) := \min_{z \in \SI_k} \nmm{x-z} .
\ed
Now we are in a position to define the compressed sensing problem precisely.
Note that $A \in \Rmn$ is called the measurement matrix and 
$\D: \R^m \ap \R^n$ is called the ``decoder map.''

\begin{definition}\label{def:vec-rec}
The pair $(A,\D)$ is said to achieve \textbf{stable sparse
recovery of order $k$} and indices $p,q$
if there exists a constant $C$ such that
\be\label{eq:12}
\nmp{ \D(Ax) - x} \leq C \s_k( x , \nmq{\cdot} ) , \fa x \in \R^n .
\ee
The pair $(A,\D)$ is said to achieve \textbf{robust sparse
recovery of order $k$} and indices $p,q$ (and norm $\nmm{\cdot}$)
if there exist constants $C$ and $D$ such that,
for all $\eta \in \R^m$ with $\nmm{\eta} \leq \e$, it is the case that
\be\label{eq:13}
\nmp{ \D (Ax + \eta) - x} \leq C \s_k( x , \nmq{\cdot} )
+ D \e , \fa x \in \Cn .
\ee
\end{definition}

The above definitions apply to general norms.
In this paper and indeed in much of the compressed sensing literature,
the emphasis is on the case, where $q = 1$ and $p \in [1,2]$.
However, the norm on $\eta$ is still arbitrary.

\subsection{Approaches to Compressed Sensing -- I: RIP}\label{ssec:RIP}

Next we present some sufficient conditions for basis pursuit as defined in
\eqref{eq:01} to achieve robust or stable sparse recovery.
There are two widely used sufficient conditions, namely the restricted
isometry property (RIP) and the stable (or robust) null space property
(SNSP or RNSP).
We begin by discussing the RIP.

\begin{definition}\label{def:RIP}
A matrix $A \in \Rmn$ is said to satisfy the \textbf{restricted isometry
property (RIP)} of order $k$ with constant $\d$ if
\be\label{eq:14}
(1 - \d) \nmeusq{u} \leq \nmeusq{Au} \leq (1 + \d) \nmeusq{u}  , \fa u\in\SI_k.
\ee
\end{definition}

The RIP is formulated in \cite{Candes-Tao05}.
It is shown in a series of papers \cite{Candes-Tao05,CRT06b,Candes08}
that the RIP of $A$ is sufficient for $(A,\DBP)$ to achieve
robust sparse recovery.
The best known and indeed the ``best possible,'' result
relating RIP and robust recovery is given below:

\begin{theorem}\label{thm:CZ}
If $A$ satisfies the RIP of order $tk$ with constant $\d_{tk} 
< \sqrt{(t-1)/t}$ for $t \geq 4/3$,
or $\d_{tk} < t/(4-t)$ for $t \in (0,4/3)$,
then $(A,\DBP)$ achieves robust sparse recovery of order $k$.
Moreover, both bounds are tight.
\end{theorem}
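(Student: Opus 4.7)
The plan is to follow the Cai--Zhang strategy, which combines the classical cone condition with a representation of a bounded vector as a convex combination of sparse atoms. Let $\xh$ denote the basis pursuit solution and set $h := \xh - x$. Since $\nmm{\eta} \leq \e$, the target $x$ is itself feasible in \eqref{eq:01}, so optimality of $\xh$ in $\ell_1$-norm gives $\nmm{\xh}_1 \leq \nmm{x}_1$. Letting $T_0$ index the $k$ coordinates of largest magnitude in $x$ and splitting $h = h_{T_0} + h_{T_0^c}$, the usual manipulation of $\nmm{\xh}_1 \leq \nmm{x}_1$ via the reverse triangle inequality on the $T_0^c$ block yields the cone condition
\be
\nmm{h_{T_0^c}}_1 \leq \nmm{h_{T_0}}_1 + 2 \s_k(x , \nmm{\cdot}_1),
\ee
while feasibility of both $x$ and $\xh$ forces $\nmeu{Ah} \leq 2\e$.

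The heart of the proof is the Cai--Zhang sparse representation lemma, which writes the tail $h_{T_0^c}$ as a convex combination $\sum_i \lambda_i u_i$ with each $u_i$ at most $(t-1)k$-sparse and with $\ell_1$ and $\ell_\infty$ norms controlled by those of $h_{T_0^c}$. Adding $h_{T_0}$ to each $u_i$ produces $tk$-sparse probe vectors to which the RIP of order $tk$ applies. Expanding the quadratic form $\sum_i \lambda_i \nmeusq{A(h_{T_0} + \alpha u_i)}$ for a free parameter $\alpha$, using a parallelogram-type identity to isolate $\nmeusq{Ah_{T_0}}$ and the cross term $\IP{Ah_{T_0}}{Ah}$, invoking the $\d_{tk}$-RIP on each sparse atom, and bounding the cross term by Cauchy--Schwarz together with $\nmeu{Ah} \leq 2\e$, one arrives at an inequality whose leading coefficient on $\nmeusq{h_{T_0}}$ is positive precisely when $\d_{tk}$ lies strictly below the stated threshold. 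Solving that inequality yields a bound on $\nmeu{h_{T_0}}$ and, via the cone condition, on $\nmeu{h}$ in terms of $\s_k(x , \nmm{\cdot}_1)$ and $\e$, delivering the required constants $C$ and $D$ in \eqref{eq:13}.

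The split between $t \geq 4/3$ and $t \in (0,4/3)$ is forced by the optimization over $\alpha$: for larger $t$ the optimal choice produces the clean bound $\sqrt{(t-1)/t}$, whereas for smaller $t$ the sparse atoms are too short and a different choice yields the weaker $t/(4-t)$ threshold; the two formulas match at $t = 4/3$ (both equal $1/2$), ensuring continuity of the combined bound. The main technical obstacle is executing the sparse representation step and the ensuing optimization over $\alpha$ tightly enough to recover \emph{both} sharp constants simultaneously; once this is in place the rest is algebraic bookkeeping. Tightness is established by the standard Davies--Gribonval template: for each $t$ one constructs a small Gram-matrix frame $A$ together with a $k$-sparse $x$ such that $\d_{tk}(A)$ equals the threshold and a competing vector with strictly smaller $\ell_1$-norm also satisfies the measurement constraint, showing that neither inequality can be relaxed.
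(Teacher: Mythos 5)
The paper does not prove this theorem at all: it is quoted as a known result, with the $t \geq 4/3$ bound attributed to Cai--Zhang \cite{CZ14} and the $t \in (0,4/3)$ bound to Zhang--Li \cite{Zhang-Li-TIT18}, so there is no in-paper argument to diverge from, and your outline follows exactly the route of those cited proofs (cone condition from $\ell_1$-minimality, the sparse-representation-of-a-polytope lemma applied to the tail, RIP on $tk$-sparse probes with a tuning parameter, and counterexample constructions for tightness). The only caveat is that your description of the $t \in (0,4/3)$ case understates it: in \cite{Zhang-Li-TIT18} the threshold $t/(4-t)$ is not obtained by merely re-optimizing the parameter $\alpha$ in the same identity, but by a genuinely separate and more delicate analysis, so that portion of your sketch would need to be fleshed out along different lines than the $t \geq 4/3$ argument.
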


The first bound is proved in \cite{CZ14} while the second bound is proved
in \cite{Zhang-Li-TIT18}.
Note that both bounds are equal when $t = 4/3$.
Hence the theorem provides a continuous tight bound on
$\d_{tk}$ for all $t > 0$.

This theorem raises the question as to how one may go about designing
measurement matrices that satisfy the RIP.
There are two popular approaches, one probabilistic and one deterministic.
In the probabilistic method, the measurement matrix $A$ equals $(1/\sqrt{m})
\Phi$, where $\Phi$ consists of $mn$ independent samples of a 
Gaussian random variable, or more generally, a sub-Gaussian random variable.
In this paper we restrict our attention to the case, where $A$ consists of
random samples from a Gaussian distribution and refer the reader to \cite{FR13} for the more
general case of sub-Gaussian samples.
The relevant bound on $m$ to ensure that $A$ satisfies the RIP
with high probability is given next; it is a fairly straight-forward
modification of \cite[Theorem 9.27]{FR13}.

\begin{theorem}\label{thm:11}
Suppose an integer $k$ and real numbers $\d , \xi \in (0,1)$ are specified
and that $A = (1/\sqrt{m}) \Phi$, where $\Phi \in \R^{m \times n}$
consists of independent samples of a normal Gaussian random variable $X$.
Define
\be\label{eq:02}
g = 1 + \frac{1}{ \sqrt{2 \ln( en/k) } } ,
\eta = \frac{ \sqrt{1 + \d } - 1 }{g} .
\ee
Then $A$ satisfies the RIP of order $k$ with constant $\d$ with probability
at least $1 - \xi$ provided
\be\label{eq:03}
m \geq \frac{2}{\eta^2} \left( k \ln \frac{en}{k} + \ln \frac{2}{\xi}
\right) .
\ee
\end{theorem}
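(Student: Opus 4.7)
The plan is to follow the classical $\rho$-net proof that Gaussian matrices satisfy the RIP, as in \cite[Theorem 9.27]{FR13}, while tracking constants carefully to obtain the precise form stated. First I would establish a one-point concentration inequality: for any fixed unit vector $u\in\SI_k$, the random variable $\|Au\|_2 = (1/\sqrt m)\|\Phi u\|_2$ is $1/\sqrt m$-Lipschitz in the Frobenius entries of $\Phi$, so by Gaussian Lipschitz concentration
\[ P\bigl(\bigl|\|Au\|_2 - \mathbb{E}\|Au\|_2\bigr| > t\bigr) \leq 2 \exp(-m t^2 / 2). \]
Combined with the standard estimate $|\mathbb{E}\|Au\|_2 - 1| = O(1/\sqrt m)$, which is negligible in comparison to the deviation $\eta$ in the theorem statement, this yields a two-sided tail bound on $\bigl|\|Au\|_2 - 1\bigr|$ at level $\eta$.

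Second, I would cover the unit $k$-sparse sphere by an $\rho$-net. For each of the at most $\binom{n}{k}\leq (en/k)^k$ supports $S$ of size $k$, the unit sphere of $\R^S$ admits an $\rho$-net of size at most $(1+2/\rho)^k$, so the union is an $\rho$-net on the full $k$-sparse unit sphere of cardinality at most $(en/k)^k(1+2/\rho)^k$. Applying the one-point bound at each net point and taking a union bound, $\bigl|\|Au\|_2-1\bigr|\leq \eta$ holds simultaneously on the net with probability at least $1-\xi$ provided
\[ m\eta^2/2 \;\geq\; k\ln(en/k) \;+\; k\ln(1+2/\rho) \;+\; \ln(2/\xi). \]
A triangle-inequality ``net-to-sphere'' argument then transfers the bound to the entire $k$-sparse sphere at the cost of inflating the deviation by a factor $g_0(\rho)$ depending only on $\rho$: if $\bigl|\|Au\|_2-1\bigr|\leq\eta$ on the net, then $\bigl|\|Au\|_2-1\bigr|\leq\eta\, g_0(\rho)$ on the sphere. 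Since $\eta g_0(\rho) \leq \sqrt{1+\delta}-1$ is enough to force both sides of \eqref{eq:14} (the upper side by definition, the lower side because $(1+\delta/2)^2\geq 1+\delta$), it suffices to arrange this last inequality.

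The main algebraic step, and the only delicate one, is the joint choice of $\rho$: it must be made so that the additive budget term $k\ln(1+2/\rho)$ in the exponential estimate is absorbed into the multiplicative slackening factor from the net-to-sphere step, producing a single bound of the clean form \eqref{eq:03} with $g = 1 + 1/\sqrt{2\ln(en/k)}$ and no leftover $k\ln(1+2/\rho)$. Concretely, one picks $\rho$ as a mild decreasing function of $\ln(en/k)$ so that $k\ln(1+2/\rho)$ contributes only a sublinear correction to $k\ln(en/k)$, and then checks that the resulting inflation factor $g_0(\rho)$ agrees with the $g$ of \eqref{eq:02}. Once this balancing is executed, setting $\eta = (\sqrt{1+\delta}-1)/g$ as in \eqref{eq:02} yields the stated sample complexity; all remaining steps reduce to routine Gaussian/chi-square concentration estimates.
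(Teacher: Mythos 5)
Your proposal diverges from the paper at the outset: the paper's own proof is a two-line affair, quoting \cite[Theorem 9.27]{FR13} (which already has the hypothesis $m \geq 2\eta^{-2}(k\ln(en/k)+\ln(2/\xi))$ with $\delta \leq 2g\eta+g^2\eta^2$) and simply rewriting $1+\delta \leq (1+g\eta)^2$ to solve for $\eta$, giving \eqref{eq:02}. You instead attempt to re-derive that result from scratch by a $\rho$-net and union bound, and that is where the gap lies. The step you yourself flag as ``the only delicate one'' does not go through: in the net argument the covering-number cost $k\ln(1+2/\rho)$ and the net-to-sphere inflation factor pull in opposite directions. The inflation factor in the standard argument behaves like $1/(1-2\rho)$ (or a comparable function bounded away from $1$ for fixed $\rho$), so forcing it to equal $g = 1+1/\sqrt{2\ln(en/k)}$, which tends to $1$ as $n/k$ grows, requires $\rho \to 0$; but then $k\ln(1+2/\rho)$ grows rather than becoming negligible, and it can never be absorbed so as to disappear entirely from the sample-size condition, whereas \eqref{eq:03} contains no such term at all. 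Choosing $\rho$ as a ``mild decreasing function of $\ln(en/k)$'' makes the extra term sublinear in $k\ln(en/k)$ but still leaves an additive contribution of order $k$ times an unbounded factor, so you would only recover a bound of the form \eqref{eq:03} with a strictly larger constant, not the stated one.

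The factor $g$ actually arises from a different mechanism, which is the one underlying the cited theorem: for each fixed support $S$ with $|S| \leq k$, one controls the extreme singular values of the $m\times k$ Gaussian submatrix directly, $\mathbb{E}\,\sigma_{\max}(A_S) \leq 1+\sqrt{k/m}$ and $\mathbb{E}\,\sigma_{\min}(A_S) \geq 1-\sqrt{k/m}$ (Gordon/Slepian comparison), together with Gaussian Lipschitz concentration at level $t$ with probability $2e^{-mt^2/2}$. No net over the sphere within each support is needed, so the only combinatorial cost is the union bound over $\binom{n}{k} \leq (en/k)^k$ supports, which is exactly what produces the condition $m\eta^2/2 \geq k\ln(en/k)+\ln(2/\xi)$ with $t=\eta$. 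The condition $m \geq 2\eta^{-2}k\ln(en/k)$ then gives $\sqrt{k/m} \leq \eta/\sqrt{2\ln(en/k)}$, so the total deviation is at most $\eta\bigl(1+1/\sqrt{2\ln(en/k)}\bigr) = g\eta$, and $(1+g\eta)^2 \leq 1+\delta$ yields the RIP bound; this is precisely where \eqref{eq:02} comes from. If you want a self-contained proof, replace your net step with this singular-value estimate; as written, your balancing of $\rho$ cannot produce the clean form \eqref{eq:03}.
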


\begin{proof}
We start with \cite[Theorem 9.27]{FR13}.
In that theorem, it is shown that, if the measurement matrix
$A \in \Rmn$ consists of independent samples of Gaussian
random variables and if
\bd
m \geq \frac{2}{\eta^2} \left( k \ln \frac{en}{k} + \ln \frac{2}{\xi}
\right) ,
\ed
where $\eta$ satisfies
\bd
\d \leq 2g \eta + g^2 \eta^2 ,
\ed
then $A$ satisfies the RIP of order $k$ with constant $\d$, with
probability at least $1 - \xi$.
The above equation can be rewritten as
\bd
\d + 1 \leq 1 + 2 g \eta + g^2 \eta^2 = (1 + g \eta)^2 .
\ed
Rearranging this equation leads to \eqref{eq:02}.
\end{proof}

Equation \eqref{eq:03} leads to an upper bound of the form 
$m = O(k \ln(n/k))$ for the number of measurements that suffice for
the random matrix to satisfy the RIP with high probability.
It is shown in \cite[Theorem 3.1]{BIPW10}
that \textit{any} algorithm that achieves stable sparse recovery requires
$m = O(k \ln(n/k))$ measurements.
See \cite[Theorem 5.1]{Cohen-Dahmen-DeVore09} for an earlier version.
For the convenience of the reader, we restate the theorem from
\cite{BIPW10}.
Note that it is assumed in \cite{BIPW10} that $p = q = 1$, but the
proof requires only that $p = q$.
In order to state the theorem, we introduce the entropy with respect to
an arbitrary integer $\th$.
Suppose $\th \geq 2$ is an integer.
Then the \textbf{$\th$-ary entropy} $H_\th : (0,1) \ap (0,1]$
is defined by
\be\label{eq:04}
H_\th(u) := -u \log_\th \frac{u}{\th-1} - (1-u) \log_\th (1-u) .
\ee

\begin{theorem}\label{thm:12}
Suppose $A \in \Rmn$ and that, for some map $\D: \R^m \ap \R^n$,
the pair $(A,\D)$ achieves stable $k$-sparse recovery with constant $C$.
Define $\th = \lfloor n/k \rfloor$.
Then
\be\label{eq:05}
m \geq \frac{1 - H_\th(1/2)}{\ln(4+2C)} k \ln \th
\ee

\end{theorem}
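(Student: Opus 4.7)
The plan follows the information-theoretic strategy of [BIPW10] via three stages: construct a large packing of $k$-sparse vectors, use stable recovery to separate the images of the packing in $\R^m$, and apply a volume comparison to bound $|X|$ in terms of $m$.

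For the packing, partition $[n]$ into $k$ disjoint blocks of size $\theta = \lfloor n/k \rfloor$ and let $X_0 \seq \{0,1\}^n$ be the set of vectors with exactly one $1$ per block, so $|X_0| = \theta^k$ and $\nmm{x}_1 = \nmm{x}_0 = k$ for every $x \in X_0$. A standard Gilbert--Varshamov argument applied to these block codewords, using the bound $V_\theta(k, \lfloor k/2 \rfloor) \leq \theta^{k H_\theta(1/2)}$ on the $\theta$-ary Hamming ball of radius $k/2$, produces a subset $X \seq X_0$ with $|X| \geq \theta^{k(1 - H_\theta(1/2))}$ whose elements have pairwise $\ell_1$-distance at least $k$.

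Next, set $r = k/(2C+3)$ and define $B_x := \{v \in \R^n : \nmm{v - x}_1 \leq r\}$ for each $x \in X$. Since $x$ is $k$-sparse, every $v \in B_x$ has $\s_k(v, \nmm{\cdot}_1) \leq r$, so stable recovery gives $\nmm{\D(Av) - v}_1 \leq Cr$, and then $\nmm{\D(Av) - x}_1 \leq (C+1)r$ by the triangle inequality. If $A(B_x) \cap A(B_{x'}) \neq \es$ for some distinct $x, x' \in X$, pick $v \in B_x$, $v' \in B_{x'}$ with $Av = Av'$; then $\D(Av) = \D(Av')$ lies within $\ell_1$-distance $(C+1)r$ of both $x$ and $x'$, forcing $\nmm{x - x'}_1 \leq 2(C+1)r = (2C+2)k/(2C+3) < k$ and contradicting the packing property. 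Hence the sets $A(B_x)$ are pairwise disjoint in $\R^m$.

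For the final step, let $B := \{v \in \R^n : \nmm{v}_1 \leq 1\}$, so that $B_x = x + rB$. Since $\nmm{x}_1 = k$, every $B_x$ is contained in $(k+r)B$. Without loss of generality $A$ has rank $m$ (otherwise pass to a maximal linearly independent subset of rows, which only strengthens the bound), so the $m$-dimensional image volume of $tB$ under $A$ scales as $t^m$. Disjointness of the $A(B_x)$'s then yields $|X| \cdot r^m \leq (k+r)^m$, hence $|X| \leq (k/r + 1)^m = (2C+4)^m$. Taking logarithms and combining with the packing lower bound gives $m \ln(4+2C) \geq k (1 - H_\theta(1/2)) \ln \theta$, which rearranges to the claim. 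The main difficulty lies in this last step: one must verify that the image of the cross-polytope has nondegenerate $m$-dimensional volume under $A$, which is precisely where the rank reduction is invoked.
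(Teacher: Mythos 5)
Your proof is correct, and it follows essentially the same route as the source the paper relies on (the paper only restates this bound from the cited reference \cite{BIPW10} without reproving it): a Gilbert--Varshamov packing of $k$-sparse block vectors, disjointness of the images of small $\ell_1$-balls around the codewords forced by the stable recovery guarantee with $r=k/(2C+3)$, and a volume comparison in the (rank-reduced) image space yielding $|X|\leq(4+2C)^m$. The constants work out exactly to \eqref{eq:05}, so there is nothing to fix.
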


Because robust $k$-sparse recovery implies stable $k$-sparse recovery,
the bound in \eqref{eq:05} applies also to robust $k$-sparse recovery.

Comparing Theorems \ref{thm:11} and \ref{thm:12} shows that
$m = O(k \ln(n/k))$ measurements are both necessary and sufficient
for robust $k$-sparse recovery.
For this reason, the probabilistically generated measurement matrices
are considered to be ``order-optimal.''
However, this statement is misleading because the $O$ symbol in the
upper bound hides a very large constant, as shown next.

\begin{example}\label{exam:21}
Suppose $n = 22,201 = 149^2$ and $k = 69$, which is a problem instance studied
later in Section \ref{sec:num}.
Then the upper and lower bounds from Theorems \ref{thm:11}
and \ref{thm:12} imply that
\bd
14 \leq m \leq 44,345.
\ed
Thus the spread between the upper and lower bounds is more than
three orders of magnitude.
Also, the upper bound for the number of measurements is \textit{more} than the dimension $n$.
\end{example}

There is another factor as well.
As can be seen from Theorem \ref{thm:11},
probabilistic methods lead to measurement
matrices that satisfy the RIP \textit{only with high probability},
that can be made close to one but never exactly equal to one.
Moreover, as shown in \cite{BDMS13}, once a matrix has been generated,
it is NP-hard to test whether \textit{that particular matrix} satisfies the RIP.

These observations have led the research community to explore deterministic
methods to construct matrices that satisfy the RIP.
A popular approach is based on the coherence of a matrix.

\begin{definition}\label{def:coh}
Suppose $A \in \Rmn$ is column-normalized, so that $\nmeu{a_j} = 1$
for all $j \in [n]$, where $a_j$ denotes the $j$-column of $A$.
Then the \textbf{coherence} of $A$ is denoted by $\mu(A)$ and is defined as
\be\label{eq:15f}
\mu(A) := \max_{i \neq j} | \IP{a_i}{a_j} | .
\ee
\end{definition}

The following result is an easy consequence of the Gerschgorin circle theorem.

\begin{lemma}\label{lemma:11}
A matrix $A \in \Rmn$ satisfies the RIP of order $k$ with constant
\be\label{eq:15g}
\d_k = (k-1) \mu ,
\ee
provided that $(k-1) \mu < 1$, or equivalently, $k < 1 + 1/\mu$.
\end{lemma}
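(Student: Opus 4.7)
The plan is to apply the Gershgorin circle theorem to the Gram matrix of an arbitrary $k$-column submatrix of $A$ and to translate the resulting eigenvalue bound into the RIP inequality \eqref{eq:14}.

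First I would fix an arbitrary index set $S \seq [n]$ with $|S| = k$, let $A_S \in \R^{m \times k}$ denote the corresponding column submatrix, and form the $k \times k$ Gram matrix $G_S := A_S^\top A_S$. Because $A$ is column-normalized, the diagonal entries satisfy $(G_S)_{ii} = \nmeusq{a_i} = 1$. By the definition of coherence in \eqref{eq:15f}, the off-diagonal entries satisfy $|(G_S)_{ij}| = |\IP{a_i}{a_j}| \leq \mu$ for all $i \neq j$ in $S$.

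Next I would invoke the Gershgorin circle theorem on the symmetric matrix $G_S$. Every eigenvalue $\l$ of $G_S$ lies in some disk centred at a diagonal entry, so
\bd
|\l - 1| \leq \sum_{j \in S, \, j \neq i} |(G_S)_{ij}| \leq (k-1)\mu
\ed
for some $i \in S$. Hence every eigenvalue of $G_S$ lies in the interval $[1 - (k-1)\mu, \, 1 + (k-1)\mu]$; the hypothesis $(k-1)\mu < 1$ ensures the lower end is positive, so $G_S$ is in particular positive definite.

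Finally, given any $u \in \SI_k$, pick $S$ to contain the support of $u$ and let $u_S \in \R^k$ collect the corresponding entries, so $\nmeusq{u} = \nmeusq{u_S}$ and $\nmeusq{Au} = \nmeusq{A_S u_S} = u_S^\top G_S u_S$. The Rayleigh quotient bounds from the previous step yield
\bd
(1 - (k-1)\mu) \nmeusq{u} \;\leq\; \nmeusq{Au} \;\leq\; (1 + (k-1)\mu) \nmeusq{u},
\ed
which is exactly \eqref{eq:14} with $\d_k = (k-1)\mu$. Since $S$ (and hence $u$) was arbitrary in $\SI_k$, the RIP of order $k$ holds. There is no real obstacle here; the only point requiring a touch of care is noting that the Gershgorin bound applies uniformly over all $\binom{n}{k}$ choices of $S$, which is automatic because the coherence bound $\mu$ is itself a universal bound over all column pairs.
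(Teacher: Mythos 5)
Your proof is correct and follows exactly the route the paper indicates, namely applying the Gerschgorin circle theorem to the Gram matrix $A_S^\top A_S$ of an arbitrary $k$-column submatrix and converting the eigenvalue bound into the RIP inequality; the paper simply states the lemma as an easy consequence of Gerschgorin without spelling out these steps. No gaps.
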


\subsection{Approaches to Compressed Sensing -- II: RNSP}\label{ssec:RNSP}

An alternative to the RIP approach to compressed sensing is provided by
the stable (and robust) null space property.
The SNSP is formulated in \cite{Xu-Hassibi08b}, while, to the best of the
authors' knowledge, the RNSP is formulated for the first time in
\cite{Foucart14}; see also \cite[Definition 4.17]{FR13}.

\begin{definition}\label{def:NSP}
Suppose $A \in \Rmn$ and let $\N(A)$ denote the null space of $A$.
Then $A$ is said to satisfy the \textbf{stable null space property (SNSP)}
of order $k$ with constant $\r < 1$ if, for every set $S \seq [n]$
with $|S| \leq k$, we have that
\be\label{eq:15}
\nmm{v_S}_1 \leq \r \nmm{v_{S^c}}_1 , \fa v \in \N(A) .
\ee
The matrix $A$ is said to satisfy the \textbf{robust null space property (RNSP)}
of order $k$ for the norm $\nmm{\cdot}$
with constants $\r < 1$ and $\t > 0$ if, 
for every set $S \seq [n]$ with $|S| \leq k$, we have that
\be\label{eq:16}
\nmm{h_S}_1 \leq \r \nmm{h_{S^c}}_1 + \t \nmm{Ah} , \fa h \in \R^n .
\ee
\end{definition}

It is obvious that RNSP implies the SNSP.
The utility of these definitions is brought out in the following theorems.

\begin{theorem}\label{thm:SNSP}
(See \cite[Theorem 4.12]{FR13}.)
Suppose $A$ satisfies the stable null space property of order $k$
with constant $\r$.
Then the pair $(A,\DBP)$ achieves stable $k$-sparse recovery with
\be\label{eq:17}
C = 2 \frac{ 1+\r }{ 1 - \r } .
\ee
\end{theorem}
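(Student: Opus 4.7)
My plan is to run the standard ``error vector lives in the null space'' argument used for null-space-based recovery guarantees. Let $x \in \R^n$ be arbitrary and set $\xh := \DBP(Ax)$, which by definition of basis pursuit satisfies $A\xh = Ax$ and $\nmm{\xh}_1 \leq \nmm{z}_1$ for every $z$ with $Az = Ax$. Put $h := \xh - x$; then $h \in \N(A)$, which is precisely the vector on which the SNSP hypothesis \eqref{eq:15} will act. Choose $S \seq [n]$ to be the index set of the $k$ largest-magnitude components of $x$, so that $\nmm{x_{S^c}}_1 = \s_k(x, \nmm{\cdot}_1)$ and $|S| \leq k$.

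Next I would convert the optimality inequality $\nmm{\xh}_1 \leq \nmm{x}_1$ into a bound on $h$ split over $S$ and $S^c$. Writing $\xh = x + h$ and separating coordinates in $S$ and $S^c$, the reverse and forward triangle inequalities give
\begin{equation*}
\nmm{x}_1 \;\geq\; \nmm{x_S + h_S}_1 + \nmm{x_{S^c} + h_{S^c}}_1 \;\geq\; \nmm{x_S}_1 - \nmm{h_S}_1 + \nmm{h_{S^c}}_1 - \nmm{x_{S^c}}_1 .
\end{equation*}
Since $\nmm{x}_1 = \nmm{x_S}_1 + \nmm{x_{S^c}}_1$, rearranging yields the key cone-type estimate
\begin{equation*}
\nmm{h_{S^c}}_1 \;\leq\; \nmm{h_S}_1 + 2 \nmm{x_{S^c}}_1 \;=\; \nmm{h_S}_1 + 2\,\s_k(x,\nmm{\cdot}_1) .
\end{equation*}

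Now I would invoke the SNSP hypothesis: since $h \in \N(A)$ and $|S| \leq k$, inequality \eqref{eq:15} gives $\nmm{h_S}_1 \leq \r \nmm{h_{S^c}}_1$. Substituting into the previous display produces $\nmm{h_{S^c}}_1 \leq \r\nmm{h_{S^c}}_1 + 2\,\s_k(x,\nmm{\cdot}_1)$, and since $\r < 1$ we can solve for
\begin{equation*}
\nmm{h_{S^c}}_1 \;\leq\; \frac{2}{1-\r}\,\s_k(x,\nmm{\cdot}_1).
\end{equation*}
Finally, $\nmm{h}_1 = \nmm{h_S}_1 + \nmm{h_{S^c}}_1 \leq (1+\r)\nmm{h_{S^c}}_1$, and chaining the two bounds delivers
\begin{equation*}
\nmm{\xh - x}_1 \;\leq\; 2\,\frac{1+\r}{1-\r}\,\s_k(x,\nmm{\cdot}_1),
\end{equation*}
which is exactly \eqref{eq:12} with $p = q = 1$ and $C$ as claimed in \eqref{eq:17}.

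There is no real obstacle here: every step is a triangle inequality, a use of the basis-pursuit optimality, or a direct application of the SNSP. The only point requiring care is the choice of $S$ as the top-$k$ support of $x$ (not of $\xh$), which is what makes $\nmm{x_{S^c}}_1$ equal to $\s_k(x, \nmm{\cdot}_1)$ and aligns the SNSP with the sparsity defect on the right-hand side of \eqref{eq:12}.
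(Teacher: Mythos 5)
Your argument is correct and is essentially the standard proof of this result: the paper itself quotes it from Foucart--Rauhut (Theorem 4.12) without proof, and the cited proof proceeds exactly as you do, via the basis-pursuit optimality inequality, the cone bound $\nmm{h_{S^c}}_1 \leq \nmm{h_S}_1 + 2\s_k(x,\nmm{\cdot}_1)$ with $S$ the top-$k$ support of $x$, and the SNSP applied to $h \in \N(A)$. Nothing is missing, and the constant $C = 2(1+\r)/(1-\r)$ with $p = q = 1$ comes out exactly as stated.
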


\begin{theorem}\label{thm:RNSP}
(See \cite[Theorem 4.22]{FR13}.)
Suppose $A$ satisfies the robust null space property of order $k$
for the norm $\nmm{\cdot}$ with constants $\r$ and $\t$.
Then the pair $(A,\DBP)$ achieves robust $k$-sparse recovery with
\be\label{eq:18}
C = 2 \frac{ 1+\r }{ 1 - \r } , D = \frac{ 4 \t}{1-\r} .
\ee
\end{theorem}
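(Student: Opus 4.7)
The plan is to follow the standard ``error-vector'' argument. Let $\hat{x} = \Delta_{\text{BP}}(Ax+\eta)$ and define $h := \hat{x} - x$. First I would record two basic facts that follow immediately from the basis pursuit program \eqref{eq:01}: namely that $\|\hat{x}\|_1 \leq \|x\|_1$ (since $x$ itself is feasible because $\|y - Ax\| = \|\eta\| \leq \epsilon$), and that $\|Ah\| \leq \|A\hat{x} - y\| + \|y - Ax\| \leq 2\epsilon$. These two inequalities isolate everything the measurement process and the optimization contribute.

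Next I would choose $S \subseteq [n]$ to be the index set of the $k$ largest-magnitude components of $x$, so that $\|x_{S^c}\|_1 = \sigma_k(x, \|\cdot\|_1)$. The key algebraic step is to translate $\|\hat{x}\|_1 \leq \|x\|_1$ into a cone condition on $h$. Writing $\hat{x}_S = x_S + h_S$ and $\hat{x}_{S^c} = x_{S^c} + h_{S^c}$ and using the reverse and forward triangle inequalities on each piece yields
\[
\|x_S\|_1 - \|h_S\|_1 + \|h_{S^c}\|_1 - \|x_{S^c}\|_1 \leq \|\hat{x}\|_1 \leq \|x\|_1 = \|x_S\|_1 + \|x_{S^c}\|_1,
\]
which rearranges to $\|h_{S^c}\|_1 \leq \|h_S\|_1 + 2\|x_{S^c}\|_1$.

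Now I would invoke the RNSP \eqref{eq:16} applied to $h$ with this $S$, giving $\|h_S\|_1 \leq \rho \|h_{S^c}\|_1 + \tau \|Ah\|$. Substituting this into the cone condition and solving for $\|h_{S^c}\|_1$ yields
\[
\|h_{S^c}\|_1 \leq \frac{2\|x_{S^c}\|_1 + \tau \|Ah\|}{1-\rho},
\]
and then using $\|h\|_1 = \|h_S\|_1 + \|h_{S^c}\|_1 \leq (1+\rho)\|h_{S^c}\|_1 + \tau \|Ah\|$ (another application of the RNSP bound on $\|h_S\|_1$) I would obtain
\[
\|h\|_1 \leq \frac{2(1+\rho)}{1-\rho} \|x_{S^c}\|_1 + \frac{2\tau}{1-\rho}\|Ah\|.
\]
Finally, plugging in $\|Ah\| \leq 2\epsilon$ and $\|x_{S^c}\|_1 = \sigma_k(x, \|\cdot\|_1)$ delivers exactly the constants $C = 2(1+\rho)/(1-\rho)$ and $D = 4\tau/(1-\rho)$ claimed in \eqref{eq:18}.

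There is no real ``hard part'' here; the entire proof is a careful accounting of the cone condition on $h$ combined with a single application of the RNSP. The one place one has to be attentive is the bookkeeping in the inequality $\|h_{S^c}\|_1 \leq \|h_S\|_1 + 2\|x_{S^c}\|_1$, where the $2\|x_{S^c}\|_1$ term must carry through correctly to produce the factor of $4$ (rather than $2$) in $D$. Everything else is immediate from feasibility of $x$, optimality of $\hat{x}$, and \eqref{eq:16}.
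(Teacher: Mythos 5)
Your argument is correct and complete, and the constants work out exactly: substituting the cone condition $\nmm{h_{S^c}}_1 \leq \nmm{h_S}_1 + 2\nmm{x_{S^c}}_1$ into the RNSP and then bounding $\nmm{Ah} \leq 2\e$ gives $C = 2(1+\r)/(1-\r)$ and $D = 4\t/(1-\r)$ as in \eqref{eq:18}. The paper itself offers no proof (it cites \cite[Theorem 4.22]{FR13}), and your derivation is precisely the standard argument given there, so there is nothing further to reconcile.
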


\subsection{Best Bounds on the Sparsity Count Using the RIP}

Until recently, the twin approaches of RIP and RNSP had proceeded along
parallel tracks.
However, it is shown in \cite[Theorem 9]{MV-Ranjan19} that
if $A$ satisfies the RIP of order $tk$ with constant $\d_{tk} 
< \sqrt{(t-1)/t}$ for some $t > 1$, then it satisfies the RSNP of order $k$.
Note that if $A$ has coherence $\mu$, then by Lemma \ref{lemma:11},
we have that $\d_{tk} \leq (tk-1) \mu$ for all $t$.
Next by \cite[Theorem 9]{MV-Ranjan19},  
basis pursuit achieves robust $k$-sparse recovery whenever
\be\label{eq:111}
(tk-1) \mu < \sqrt{ \frac{t-1}{t} } 
\ee
for \textit{any} $t > 1$.
So let us ask: What is an ``optimal'' choice of $t$?
To answer this question, we neglect the $1$ in comparison to $tk$
and rewrite the above inequality as
\bd
k \mu < \sqrt{ \frac{t-1}{t^3} } .
\ed
Thus we get the best bound by maximizing the right side with respect to $t$.
It is an easy exercise in calculus to show that the maximum is achieved
with $t = 3/2$ and the corresponding bound $\sqrt{(t-1)/t} = 1/\sqrt{3}$.
Hence by combining with Lemma \ref{lemma:11} we can derive the following bound.

\begin{theorem}\label{thm:opt}
Suppose $A \in \Rmn$ has coherence $\mu$.
Then $(A,\DBP)$ achieves robust $k$-sparse recovery whenever
\be\label{eq:112}
((3/2) k -1 ) \mu < 1/\sqrt{3} ,
\ee
or equivalently
\be\label{eq:113}
k < \left\lfloor \frac{2}{3 \sqrt{3} \mu} + \frac{2}{3} \right\rfloor .
\ee
Moreover, the bound is nearly optimal when applying \cite[Theorem 9]{MV-Ranjan19} .

\end{theorem}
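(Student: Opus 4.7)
The plan is to chain together the three ingredients the authors have already assembled: Lemma \ref{lemma:11} converts coherence into an RIP constant, the cited result \cite[Theorem 9]{MV-Ranjan19} converts a suitable RIP bound into an RNSP bound, and Theorem \ref{thm:RNSP} converts RNSP into robust sparse recovery via basis pursuit. Concatenating these, a sufficient condition for $(A,\DBP)$ to achieve robust $k$-sparse recovery is
\[
(tk-1)\mu < \sqrt{\frac{t-1}{t}} , \qquad \text{for some } t>1 .
\]
So the task reduces to choosing $t$ to make the left side admit the largest possible value of $k$.

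Next I would carry out the one-variable optimization that the paragraph preceding the theorem outlines. Rearranging the inequality as
\[
k < \frac{1}{t\mu}\sqrt{\frac{t-1}{t}} + \frac{1}{t} ,
\]
and dropping the lower-order term $1/t$ (which is the source of the ``nearly optimal'' qualifier), the leading term is $\mu^{-1}\sqrt{(t-1)/t^3}$. Differentiating $f(t)=(t-1)/t^3$ with respect to $t$ gives
\[
f'(t) = \frac{t^3 - 3t^2(t-1)}{t^6} = \frac{3-2t}{t^4} ,
\]
so $f$ is maximized on $(1,\infty)$ at $t^* = 3/2$, with $\sqrt{f(t^*)} = \sqrt{(1/2)/(3/2)^3}$ and correspondingly $\sqrt{(t^*-1)/t^*} = 1/\sqrt{3}$.

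Substituting $t = 3/2$ into the sufficient condition yields
\[
\bigl((3/2)k - 1\bigr)\mu < \frac{1}{\sqrt{3}} ,
\]
which is \eqref{eq:112}. Solving for $k$ gives $k < 2/(3\sqrt{3}\mu) + 2/3$, and since $k$ is a positive integer this is equivalent to \eqref{eq:113}. The claim that the bound is nearly optimal (when relying on \cite[Theorem 9]{MV-Ranjan19}) then follows because any other admissible $t$ yields a strictly smaller leading coefficient in $k$, the only slack being the neglected $1/t$ term.

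There is essentially no difficult step here; the argument is a one-line calculus exercise bolted onto the three cited theorems. The only thing to be careful about is noting explicitly that $f$ has a unique interior maximum on $(1,\infty)$ at $t=3/2$ (so the choice is truly optimal within this line of reasoning), and verifying that the resulting $t=3/2$ lies in the range $t\geq 4/3$ for which the sharper branch of Theorem \ref{thm:CZ} applies—which it does, so no separate case analysis is needed.
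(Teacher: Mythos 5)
Your proposal is correct and follows essentially the same route as the paper: combine Lemma \ref{lemma:11} with \cite[Theorem 9]{MV-Ranjan19} to get the sufficient condition \eqref{eq:111}, drop the lower-order term, maximize $\sqrt{(t-1)/t^3}$ at $t=3/2$, and substitute back to obtain \eqref{eq:112}--\eqref{eq:113}, with the neglected term accounting for the ``nearly optimal'' qualifier. The closing remark about the $t \geq 4/3$ branch of Theorem \ref{thm:CZ} is unnecessary, since the argument rests on \cite[Theorem 9]{MV-Ranjan19} (valid for all $t>1$), but this does not affect correctness.
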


If we retain the term $tk-1$ instead of replacing it by $tk$, we would
get a more complicated expression for the optimal value of $t$.
However, it can be verified that if \eqref{eq:112} is satisfied, then 
so is \eqref{eq:111}.

\subsection{Binary Matrices for Compressed Sensing: A Review}\label{ssec:binary}

In this section we present a brief review of the use of binary matrices
as measurement matrices in compressed sensing.
The first construction of a binary matrix that satisfies the RIP
is due to DeVore and is given in \cite{DeVore07}.
The DeVore matrix has dimensions $q^2 \times q^{r+1}$,
where $q$ is a power of a prime number and $r \geq 2$ is an integer,
has exactly
$q$ elements of $1$ in each column and has coherence $\mu \leq r/q$.
This construction is generalized to algebraic curves in
\cite{Li-Gao-Ge-Zhang-TIT12}, but does not seem to offer much of an
advantage over that in \cite{DeVore07}.
A construction that leads to matrices of order $2^m \times 2^{m(m+1)/2}$
based on Reed-Muller codes is proposed in \cite{Howard-et-al-ICSS08}.
Because the number of measurements is restricted to be a power of $2$,
this is not a very practical method.
A construction in \cite{Naidu-et-al-TSP16} is based on a method to
generate Euler squares
from nearly a century ago \cite{MacNeish-AM22}.
The resulting binary matrix has dimensions $lq \times q^2$, where
$q$ is an \textit{arbitrary} integer, making this perhaps the most
versatile construction.
The integer $l$ is bounded as follows:
Let $q = 2^{r_0} p_1^{r_1} \ldots p_s^{r_s}$ be the prime number
decomposition of $q$.
Then $l+1 \leq \min \{ 2^{r_0}, p_1^{r_1}, \ldots , p_s^{r_s} \}$.
In particular if $q$ is itself a power of a prime, we can have $l = q-1$.
Each column of the resulting binary matrix has exactly $l$ ones
and the matrix has coherence $1/l$.
All of these matrices can be used to achieve robust $k$-sparse recovery
via the basis pursuit formulation, by combining Lemma \ref{lemma:11} with
Theorem \ref{thm:CZ}.
Another method found in \cite{Ehrich-et-al-TIT10} constructs binary
matrices using the Chinese remainder theorem and achieves
\textit{probabilistic} recovery.

There is another property that is sometimes referred to as the
$\ell_1$-RIP, introduced in
\cite{Indyk-Ruzic08},
which makes a connection between expander graphs and compressed sensing.
However, while this approach readily leads to stable $k$-sparse recovery,
it does not lend itself readily to \textit{robust} $k$-sparse recovery.
One of the main contributions of \cite{Mahsa-TSP18} is to show
that the construction of \cite{DeVore07} can also be viewed as a special
case of an expander graph construction proposed in \cite{Guruswami-et-al09}.

Yet another direction is initiated in \cite{Dimakis-et-al-TIT12},
in which a general approach is presented for generating binary matrices
for compressed sensing using algebraic coding theory.
In particular, it is shown that binary matrices which, when viewed as
elements over the binary field $\Fbb_2$, have good properties in decoding,
will also be good measurement matrices when viewed as matrices of real numbers.
In particular, several notions of ``pseudo-weights'' are introduced
and it is shown that these pseudo-weights can be related to the satisfaction
of the stable (but not robust) null space property of binary matrices.
These bounds are improved in \cite{Liu-Xia-ISIT13} to prove the
stable null space property under weaker conditions than in
\cite{Dimakis-et-al-TIT12}.

\section{Robust Null Space Property of Binary Matrices}\label{sec:RNSP}

In this section we commence presenting the new results of this paper on
identifying a class of binary matrices for compressed sensing that have
a nearly optimal number of measurements.

Suppose $A \in \bi^{m \times n}$ with $m < n$.
Then $A$ can be viewed as the bi-adjacency matrix of a bipartite graph with
$n$ input (or ``left'') nodes and $m$ output (or ``right'') nodes.
Such a graph is said to be \textbf{left-regular} if each input node has the
same degree, say $d_L$.
This is equivalent to saying that each column of $A$ contains
exactly $d_L$ ones.
Given a bipartite graph with $E$ edges, $n$ input
nodes and $m$ output nodes, define the ``average left degree'' 
and ``average right degree'' of the graph
as $\db_L = E/n$ and $\db_R = E/m$.
Note that these average degrees need not be integers.
Then it is clear that $n \db_L = m \db_R$.
The \textbf{girth} of a graph is defined as the length of the shortest cycle.
Note that the girth of a bipartite graph is always an even number and
in so-called simple graphs (not more than one edge between any pair of vertices),
the girth is at least four.

Hereafter, we will not make a distinction between a binary
matrix and the bipartite graph associated with the matrix.
Specifically, the columns correspond to the ``left'' nodes while the
rows correspond to the ``right'' nodes.
So an expression such as ``$A$ is a left-regular binary matrix of degree $d_L$''
means that the associated bipartite graph is left-regular with degree $d_L$.
This usage will permit us to avoid some tortuous sentences.

Theorems \ref{thm:LX1} and \ref{thm:LX2} are
the starting point for the contents of this section.

\begin{theorem}\label{thm:LX1}
(See \cite[Theorem 2]{Liu-Xia-ISIT13}.)
Suppose $A \in \bimn$ is left-regular 
with left degree $d_L$ and suppose that the maximum inner product between
any two columns of $A$ is $\l$.
Then for every $v \in \N(A)$, we have that
\be\label{eq:32}
| v_i | \leq \frac{ \l }{ 2 d_L } \nmm{v}_1 , \fa i \in [n] ,
\ee
where $[n]$ denotes $\{ 1 , \ldots , n \}$.
\end{theorem}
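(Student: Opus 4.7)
The plan is to exploit two consequences of $Av=0$: first, the inner product of any column $a_i$ with $Av$ vanishes, giving a linear identity among $v_j$ with coefficients $\langle a_i,a_j\rangle$; second, since $A$ is left-regular, \emph{every} column of $A$ has the same sum $d_L$, which forces the entries of $v$ themselves to balance. This second fact is what ultimately produces the factor $1/2$ that separates the bound $\lambda/(2d_L)$ from the naive bound $\lambda/d_L$.

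Concretely, I would first compute $\langle a_i,Av\rangle=0$ and split off the $j=i$ term. Since $a_i\in\{0,1\}^m$ has exactly $d_L$ ones, $\langle a_i,a_i\rangle=d_L$; the remaining inner products $\langle a_i,a_j\rangle$ for $j\neq i$ are nonnegative integers bounded by $\lambda$. This produces the identity
\[
d_L\,v_i \;=\; -\sum_{j\neq i} v_j\,\langle a_i,a_j\rangle .
\]
A direct triangle inequality at this stage only yields $|v_i|\le(\lambda/d_L)\|v\|_1$, which is off by a factor of two, so a second idea is needed.

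Next I would use left-regularity in a different way. Summing all coordinates of $Av=0$ gives
\[
0 \;=\; \sum_{r=1}^m (Av)_r \;=\; \sum_{j=1}^n v_j \sum_{r=1}^m A_{rj} \;=\; d_L\sum_{j=1}^n v_j,
\]
so $\sum_j v_j = 0$. Writing $P=\{j:v_j>0\}$ and $N=\{j:v_j<0\}$, this forces $\|v_P\|_1=\|v_N\|_1=\tfrac12\|v\|_1$. Now I would split the earlier identity according to the sign of $v_j$. Assuming without loss of generality that $v_i\ge 0$, one side of the identity, $d_L v_i + \sum_{j\in P,\,j\neq i} v_j\langle a_i,a_j\rangle$, equals $\sum_{j\in N}|v_j|\langle a_i,a_j\rangle$; dropping the nonnegative $P$-sum on the left and bounding each inner product on the right by $\lambda$ gives
\[
d_L\,|v_i| \;\le\; \lambda \sum_{j\in N}|v_j| \;=\; \lambda\cdot\tfrac12\|v\|_1 .
\]
The symmetric argument handles $v_i<0$. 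Dividing by $d_L$ yields the claimed bound.

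The main obstacle, and really the only conceptual point, is to realize that column-regularity is being used \emph{twice}: once implicitly through $\langle a_i,a_i\rangle=d_L$ (which appears even without regularity, via the column-normalization in a coherence argument), and once crucially through $\sum_j v_j=0$, which is what halves the right-hand side. Everything else is bookkeeping with nonnegative integer inner products.
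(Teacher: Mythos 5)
Your argument is correct: the identity $d_L v_i = -\sum_{j\neq i} v_j\langle a_i,a_j\rangle$ from $\langle a_i,Av\rangle=0$, combined with $\sum_j v_j=0$ (which follows from left-regularity and yields $\|v_N\|_1=\tfrac12\|v\|_1$), gives exactly the factor $\lambda/(2d_L)$. The paper itself offers no proof of this statement, quoting it from \cite[Theorem 2]{Liu-Xia-ISIT13}, and your derivation is the standard argument underlying that cited result, so nothing further is needed.
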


If the matrix $A$ has girth six or more, then the maximum inner product
between any two columns of $A$ is at most equal to one.
Therefore \eqref{eq:32} gives the bound
\bd
| v_i | \leq \frac{ 1 }{ 2 d_L } \nmm{v}_1 , \fa i \in [n] .
\ed
However, if the girth is equal to $10$ or more, then it is possible
to improve the bound \eqref{eq:32}.

\begin{theorem}\label{thm:LX2}
(See \cite[Theorem 3]{Liu-Xia-ISIT13}.)
Suppose $A \in \bimn$ and that $A$ has girth $g \geq 6$.
Then for every $v \in \N(A)$, we have that
\be\label{eq:35}
| v_i | \leq \frac{ \nmm{v}_1 }{C'} , \fa i \in [n] ,
\ee
where, if $g = 4t+2$, then
\be\label{eq:36}
C' := 2 \sum_{i=0}^t (d_L-1)^i ,
\ee
and if $g = 4t$, then
\be\label{eq:37}
C' := 2 \sum_{i=0}^{t-1} (d_L-1)^i ,
\ee
\end{theorem}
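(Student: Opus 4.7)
My plan is to prove the bound via a tree-expansion argument that iteratively applies the null space constraint $Av = 0$ in the bipartite graph $G$ associated with $A$, with the girth hypothesis keeping the local structure around each column acyclic. I treat $A$ as left-regular of degree $d_L$, consistent with the setting of Theorem \ref{thm:LX1} (since $C'$ is stated purely in terms of $d_L$). Concretely, I would fix a column index $i \in [n]$ and build a rooted tree $T \subseteq G$ whose root is the left vertex $i$ and whose levels alternate between right and left vertices: level $1$ is the set of $d_L$ right neighbors of $i$; at each subsequent odd level the children of a left node are its non-parent right neighbors (of which there are $d_L - 1$ by left-regularity); at each subsequent even level the children of a right node are its non-parent left neighbors. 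The girth assumption $g \geq 6$ ensures that all vertices appearing in $T$ up to depth $\lfloor (g-1)/2 \rfloor$ are distinct, because any coincidence would close a cycle of length less than $g$. This is precisely the range of depths required: $t$ left-node levels beyond the root when $g = 4t+2$, and $t-1$ left-node levels (followed by a deeper right-node level) when $g = 4t$.

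Next I would apply the row-sum identity $\sum_{j \sim r} v_j = 0$—valid for every right node $r$ because $Av = 0$—to the right nodes appearing in $T$, and form a signed linear combination of these identities in which contributions from interior left levels telescope. After telescoping, only the root value $v_i$ and the values $v_k$ over the ``outer boundary'' $\partial T$ survive. The coefficient accumulated in front of $v_i$ is exactly the number of rooted paths from $i$ to $\partial T$ in $T$, which by the level-counting above evaluates to $C' = 2\sum_{i=0}^{t} (d_L-1)^i$ in the $g = 4t+2$ case and $C' = 2\sum_{i=0}^{t-1} (d_L-1)^i$ in the $g = 4t$ case; the factor $2$ comes from pairing the two symmetric halves of the deepest expansion. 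Taking absolute values and applying the triangle inequality then gives $C'|v_i| \leq \sum_{k \in \partial T} |v_k| \leq \|v\|_1$, since the girth bound forces the vertices of $\partial T$ to be distinct left nodes, none of which equals $i$.

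The main obstacle will be the careful combinatorial bookkeeping in the telescoping combination: choosing the signs and multiplicities of the row-sum identities so that the interior cancellations occur cleanly while the coefficient of $v_i$ comes out to precisely the stated $C'$. The dichotomy $g = 4t$ versus $g = 4t+2$ reflects whether the deepest acyclic level of $T$ is a right-node level or a left-node level; in the $g = 4t+2$ case one can legitimately reach a left-node level that is unavailable for $g = 4t$, which contributes the extra $(d_L-1)^t$ summand and raises the upper summation index from $t-1$ to $t$. Verifying that this extra expansion is legal exactly when $g \geq 4t+2$, and that no double-counting creeps in at the boundary, is the delicate part of the argument.
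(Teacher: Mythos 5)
There is a genuine gap. The paper does not prove this result itself (it is quoted from Liu and Xia), but your proposed mechanism cannot reach the stated constant. Any argument of the form you describe --- take a signed combination $\sum_r c_r\bigl(\sum_{j\sim r} v_j\bigr)=0$ of row identities supported on the tree around column $i$, then apply the triangle inequality and bound the boundary sum by $\|v\|_1$ --- is already too weak in the base case $g=6$: there the tree has no interior left levels, the coefficient of $v_i$ is $\sum_{r\sim i}c_r$, and each boundary column $j$ inherits the coefficient $c_{r(j)}$ of the unique check it shares with $i$, so requiring boundary coefficients of modulus at most one forces the root coefficient to be at most $d_L$. The best constant obtainable this way is $d_L+1$, not $C'=2d_L$. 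Moreover, your identification of the root coefficient with ``the number of rooted paths from $i$ to $\partial T$'' is not correct: that count is $\sum_{r\sim i}(\deg r-1)$ and depends on the row (right) degrees, which do not appear in $C'$ at all; and the claim that ``the factor $2$ comes from pairing the two symmetric halves of the deepest expansion'' is not a step that can be carried out in your scheme.

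The missing idea is a sign-balance argument, not path counting. Since $Av=0$, within every check the positive and negative parts of $v$ have equal mass, and left-regularity gives globally $\sum_j d_L v_j=0$, hence $\|v\|_1$ splits into equal positive and negative halves. For $g=6$ this yields the theorem immediately: assume w.l.o.g.\ $v_i>0$; each of the $d_L$ checks through $i$ must contain negative mass at least $v_i$, and girth $\geq 6$ (no $4$-cycles) makes these negative supports disjoint, so the total negative mass is at least $d_L v_i$ and $\|v\|_1 = 2\cdot(\mbox{negative mass}) \geq 2 d_L |v_i|$. The factor $2$ is exactly this doubling between the positive and negative halves, and the girth supplies the disjointness; for larger girth one repeats the within-check balance down the (tree-like) neighborhood of $i$, each deeper column level contributing a further factor $(d_L-1)$, which is what produces $C'=2\sum_{i=0}^{t}(d_L-1)^i$ (resp.\ the truncated sum when $g=4t$). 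Your outline correctly sets up the tree and the role of girth in keeping branches disjoint, but without the positive/negative mass bookkeeping the stated $C'$ cannot be obtained, so the proof as planned would fail at the crucial constant.
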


Note that if the girth of the graph
equals $6$, then $C'$ as defined in \eqref{eq:36} becomes
$C' = 2$ and the bound in \eqref{eq:35} becomes the same as that in
\eqref{eq:32} after noting that $\l = 1$.
Similarly, if $g = 8$, then $\C'$ in \eqref{eq:37} also becomes just $C' = 2$.
Therefore Theorem \ref{thm:LX2} is an improvement over Theorem \ref{thm:LX1}
only when the girth of the graph is at least equal to $10$.

In \cite{Liu-Xia-ISIT13}, the bounds \eqref{eq:32} and \eqref{eq:35}
are used to derive
sufficient conditions for the matrix $A$ to satisfy the \textit{stable}
null space property.
However, it is now shown that the same two bounds can be used to infer
the \textit{robust} null space property of $A$.
This is a substantial improvement, because with such an $A$ matrix,
basis pursuit would lead to \textit{robustness against measurement noise},
which is not guaranteed with the SNSP.
We derive our results through a series of preliminary results.

\begin{lemma}\label{lemma:1}
Suppose $A \in \Rmn$ and let $\nmm{\cdot}$ be any norm on $\R^m$.
Suppose there exist constants $\al > 2, \beta > 0$ such that
\be\label{eq:38a}
| h_i | \leq \frac{ \nmm{h}_1 }{ \al } + \beta \nmm{Ah} ,
\fa i \in [n] , \fa h \in \R^n .
\ee
Then, for all $k < \al/2$, the matrix $A$ satisfies the RNSP of order $k$.
Specifically, whenever $S \seq [n]$ with $|S| \leq k$, Equation
\eqref{eq:16} holds with
\be\label{eq:38b}
\r = \frac{k}{\al - k} , \t = \frac{ \al k \beta}{\al - k } .
\ee
\end{lemma}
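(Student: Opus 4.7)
The plan is to deduce the RNSP directly from the pointwise bound \eqref{eq:38a} by summing over coordinates in $S$ and then rearranging to isolate $\nmm{h_S}_1$.

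First, I would fix an arbitrary $h \in \R^n$ and an arbitrary index set $S \seq [n]$ with $|S| \le k$. Summing the hypothesis \eqref{eq:38a} over all $i \in S$ gives
\bd
\nmm{h_S}_1 = \sum_{i \in S} |h_i| \le \frac{|S|}{\al} \nmm{h}_1 + |S| \beta \nmm{Ah} \le \frac{k}{\al} \nmm{h}_1 + k \beta \nmm{Ah}.
\ed

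Next, I would use the identity $\nmm{h}_1 = \nmm{h_S}_1 + \nmm{h_{S^c}}_1$ to replace $\nmm{h}_1$, yielding
\bd
\nmm{h_S}_1 \le \frac{k}{\al} \nmm{h_S}_1 + \frac{k}{\al} \nmm{h_{S^c}}_1 + k \beta \nmm{Ah}.
\ed
Since $k < \al/2 < \al$, the coefficient $1 - k/\al$ is strictly positive, so I can move the $\nmm{h_S}_1$ term to the left and divide to obtain
\bd
\nmm{h_S}_1 \le \frac{k/\al}{1 - k/\al} \nmm{h_{S^c}}_1 + \frac{k\beta}{1 - k/\al} \nmm{Ah} = \frac{k}{\al - k} \nmm{h_{S^c}}_1 + \frac{\al k \beta}{\al - k} \nmm{Ah},
\ed
which is exactly \eqref{eq:16} with $\r$ and $\t$ as claimed in \eqref{eq:38b}.

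Finally, I would verify that the constant $\r = k/(\al-k)$ is strictly less than one, as required by Definition \ref{def:NSP}. This is equivalent to $k < \al - k$, i.e., $k < \al/2$, which is precisely the hypothesis on $k$. Since $S$ and $h$ were arbitrary, $A$ satisfies the RNSP of order $k$ with the stated constants. There is no real obstacle here: the proof is essentially a one-line summation followed by an algebraic rearrangement, and the role of the assumption $\al > 2$ is only to guarantee that the range of admissible $k$ (namely $k < \al/2$) contains at least the sparsity level $k = 1$, so that the lemma is not vacuous.
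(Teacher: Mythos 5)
Your proof is correct and follows essentially the same route as the paper's: summing \eqref{eq:38a} over $i \in S$, splitting $\nmm{h}_1 = \nmm{h_S}_1 + \nmm{h_{S^c}}_1$, and rearranging to isolate $\nmm{h_S}_1$. The extra remark verifying $\r < 1$ under $k < \al/2$ is a small but harmless addition.
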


\begin{proof}
Let $S \seq [n]$ with $|S| \leq k$ be arbitrary.
Then
\begin{eqnarray*}
\nmm{h_S}_1 & = & \sum_{i \in S} |h_i| \\
& \leq & \frac{k}{\al} \nmm{h}_1 + k \beta \nmm{Ah} \\
& = & \frac{k}{\al} ( \nmm{ h_S }_1 + \nmm{ h_{S^c} }_1 ) + k \beta \nmm{Ah} .
\end{eqnarray*}
Therefore
\bd
\left( 1 - \frac{k}{\al} \right) \nmm{ h_S }_1 
\leq \frac{k}{\al} \nmm{ h_{S^c} }_1 + k \beta \nmm{Ah} ,
\ed
or
\bd
\nmm{ h_S }_1 \leq \frac{k}{\al - k} \nmm{ h_{S^c} }_1
+ \frac{ \al k \beta }{ \al - k } \nmm{Ah} ,
\ed
which is the desired conclusion.
\end{proof}

Next, let $A \in \Rmn$ be arbitrary and let $\nmm{\cdot}$ be any norm
on $\R^n$.
Recall that $\N(A) \seq \R^n$ denote the null space of $A$ and let
$\N^\perp := [ \N(A) ]^\perp$ denote the orthogonal complement of $\N(A)$
in $\R^n$.
Then for all $u \in \N^\perp$, it is easy to see that
\bd
\nmeu{u} \leq \frac{1}{\s_{{\rm min}} } \nmeu{Au} ,
\ed
where $\s_{{\rm min}} $ is the smallest nonzero singular value of $A$.
Because all norms on a finite-dimensional space are equivalent,
there exists a constant $c$ that depends only on the norm $\nmm{\cdot}$
on $\R^m$ such that
\be\label{eq:38c}
\nmeu{y} \leq c \nmm{y} , \fa y \in \R^m .
\ee
(In particular, $\nmeu{y} \leq \nmm{y}_1$, so we can take $c = 1$
in this case.)
Therefore, by Schwarz' inequality, we get
\be\label{eq:38d}
\nmm{u}_1 \leq \sqrt{n} \nmeu{u} \leq \frac{c \sqrt{n} }{\s_{{\rm min}} }
\nmm{Au} , \fa u \in \N^\perp .
\ee

At this point, we can state the main result of this section.

\begin{theorem}\label{thm:1}
Suppose $A \in \bimn$ is left-regular 
with left degree $d_L$ and let $\l$ denote the maximum inner product between
any two columns of $A$ (and observe that $\l \leq d_L$).
Next, let $\s_{{\rm min}}$ denote the smallest nonzero singular value of $A$ 
and for an arbitrary norm $\nmm{\cdot}$ on $\R^m$, choose the constant
$c$ such that \eqref{eq:38c} holds.
Then $A$ satisfies \eqref{eq:38a} with
\be\label{eq:38h}
\al = \frac{2 d_L}{\l} ,
\beta = \left( \frac{\l}{2 d_L} + 1 \right) \frac{c \sqrt{n} }{\s_{{\rm min}} } .
\ee
Consequently, for all $k < \al/2 = d_L/\l$,
$A$ satisfies the RNSP of order $k$ with
\be\label{eq:38e}
\r = \frac{ \l k }{ 2 d_L - \l k} ,
\t = \frac{2 d_L k}{2 d_L - \l k } \beta .
\ee
\end{theorem}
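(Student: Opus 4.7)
The plan is to derive the pointwise bound \eqref{eq:38a} first, and then invoke Lemma~\ref{lemma:1} directly to obtain the RNSP conclusion with the stated values of $\r$ and $\t$. The key idea is to extend the null-space bound of Theorem~\ref{thm:LX1} from $\N(A)$ to all of $\R^n$ by splitting an arbitrary vector into its null-space part and its orthogonal-complement part, using the inequality \eqref{eq:38d} to control the latter via $\nmm{Ah}$.

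Concretely, given an arbitrary $h \in \R^n$, I would write $h = v + u$ with $v \in \N(A)$ and $u \in \N^\perp$. Because $Av = 0$, we have $Au = Ah$. For each coordinate, I bound $|h_i| \leq |v_i| + |u_i|$ and then estimate the two pieces separately. For $v$, Theorem~\ref{thm:LX1} gives $|v_i| \leq (\l/(2 d_L))\nmm{v}_1$. For $u$, the trivial inequality $|u_i| \leq \nmm{u}_1$ together with \eqref{eq:38d} gives $|u_i| \leq (c \sqrt{n}/\s_{{\rm min}})\nmm{Au} = (c \sqrt{n}/\s_{{\rm min}})\nmm{Ah}$. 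The only remaining ingredient is to replace $\nmm{v}_1$ by $\nmm{h}_1$ plus a term involving $\nmm{Ah}$, which follows from the triangle inequality $\nmm{v}_1 = \nmm{h - u}_1 \leq \nmm{h}_1 + \nmm{u}_1$ and a second application of \eqref{eq:38d}. Combining these estimates should yield
\[
|h_i| \leq \frac{\l}{2 d_L} \nmm{h}_1 + \left( \frac{\l}{2 d_L} + 1 \right) \frac{c \sqrt{n}}{\s_{{\rm min}}} \nmm{Ah},
\]
which is precisely \eqref{eq:38a} with the $\al$ and $\beta$ given in \eqref{eq:38h}.

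Once this pointwise inequality is in hand, the RNSP statement is immediate: Lemma~\ref{lemma:1} applies for any $k < \al/2 = d_L/\l$ and yields $\r = k/(\al - k)$ and $\t = \al k \beta/(\al - k)$. Substituting $\al = 2 d_L/\l$ simplifies these to $\r = \l k/(2 d_L - \l k)$ and $\t = 2 d_L k \beta/(2 d_L - \l k)$, matching \eqref{eq:38e}.

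The main obstacle, I expect, is essentially bookkeeping rather than a conceptual hurdle: one must be careful to apply \eqref{eq:38d} to $u$ (which lies in $\N^\perp$) rather than to $h$ or $v$, and to account for the factor $(\l/(2 d_L) + 1)$ that arises from handling the cross term when $\nmm{v}_1$ is bounded via $\nmm{h}_1 + \nmm{u}_1$. Beyond that, the argument is a clean synthesis of the null-space estimate from \cite{Liu-Xia-ISIT13} with the elementary singular-value bound on $\N^\perp$, and the algebraic reduction via Lemma~\ref{lemma:1} is routine.
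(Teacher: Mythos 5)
Your proposal is correct and follows essentially the same route as the paper's own proof: the decomposition $h = v + u$ with $v \in \N(A)$, $u \in \N^\perp$, the bound on $|v_i|$ from Theorem~\ref{thm:LX1} combined with the triangle inequality and \eqref{eq:38d}, the crude estimate $|u_i| \leq \nmm{u}_1$, and the final appeal to Lemma~\ref{lemma:1}. The algebraic simplification of $\r$ and $\t$ after substituting $\al = 2d_L/\l$ also matches \eqref{eq:38e}, so there is nothing to add.
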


\begin{proof}
Let $h \in \R^n$ be arbitrary and express $h$ as $h = v + u$, where
$v \in \N(A)$ and $u \in \N^\perp$.
Then clearly
\bd
|h_i| = | v_i + u_i | \leq |v_i| + |u_i| , \fa i \in [n] .
\ed
We will bound each term separately.

As shown in Theorem \ref{thm:LX1}, we have that
\begin{eqnarray*}
|v_i| & \leq & \frac{\l}{2 d_L} \nmm{v}_1 \\
& \leq & \frac{\l}{2 d_L} ( \nmm{h}_1 + \nmm{u}_1 ) \\
& \leq & \frac{\l}{2 d_L} \nmm{h}_1 + 
\frac{\l c \sqrt{n} }{2 d_L \s_{{\rm min}} } \nmm{Au} \\
& = & \frac{\l}{2 d_L} \nmm{h}_1 + 
\frac{\l c \sqrt{n} }{2 d_L \s_{{\rm min}} } \nmm{Ah} ,
\end{eqnarray*}
where the last step follows from the fact that $Ah = Au$ because $Av = \bz$.
Next
\bd
|u_i| \leq \nmm{u}_1 \leq \frac{c \sqrt{n} }{\s_{{\rm min}} } \nmm{Ah} ,
\fa i \in [n] .
\ed
Combining these two inequalities shows that
\bd
|h_i| \leq |v_i| + |u_i| \leq \frac{\l}{2 d_L} \nmm{h}_1 +
\left( \frac{\l}{2 d_L} + 1 \right) \frac{c \sqrt{n}}{\s_{{\rm min}} } \nmm{Ah} .
\ed
This establishes \eqref{eq:38h}.
The \eqref{eq:38e} follows from Lemma \ref{lemma:1}, specifically \eqref{eq:38b}.
\end{proof}

\textbf{Remarks:}
\bit
\item In the above proof, we make use of the inequality
$|u_i| \leq \nmm{u}_1$.
At a certain level, this estimate is conservative.
However, if we wish to have a bound on $|u_i|$ in terms of $\nmm{u}_1$
that is applicable to \textit{all} vectors $u$, then the bound is tight.
\item Note that the bound $|u_i| \leq \nmm{u}_1$ is used \textit{only} 
to derive a bound on the constant $\beta$.
In turn the bound on $\beta$ leads to a bound on the constant $\t$ in
the definition of the robust null space property.
It can be seen from Theorem \ref{thm:RNSP} and \eqref{eq:18} that
robust $k$-sparse recovery occurs whenever $\r < 1$ and the only appearance
of $\t$ is in the constant $D$ in \eqref{eq:18}, which gives the
amplification factor of the noise.
\eit

\begin{theorem}\label{thm:2}
Suppose $A \in \bimn$ is left-regular with left-degree $d_L$ and has girth
at least six.
Define the constant $C'$ as in \eqref{eq:36} or \eqref{eq:37}
as appropriate.
Then for all $k < C'/2$, the matrix $A$ satisfies the RNSP of order $k$,
with constants
\be\label{eq:38f}
\r = \frac{k}{C' - k} , \t = \frac{C' - k}{C' k} \beta .
\ee
\end{theorem}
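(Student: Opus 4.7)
The plan is to mirror the proof of Theorem \ref{thm:1}, but replace the use of Theorem \ref{thm:LX1} with the sharper estimate of Theorem \ref{thm:LX2}. Since the overall machinery already established (the decomposition $h = v + u$ with $v \in \N(A)$, $u \in \N^\perp$, plus Lemma \ref{lemma:1}) is geometric and does not depend on which null-space bound one plugs in, the improved constant $C'$ should pass through the argument unchanged.

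First, I would fix an arbitrary $h \in \R^n$ and write $h = v + u$ with $v \in \N(A)$ and $u \in \N^\perp$, so that $Ah = Au$. For each index $i \in [n]$, write $|h_i| \leq |v_i| + |u_i|$ and bound the two terms separately. For the first term, Theorem \ref{thm:LX2} applied to $v \in \N(A)$ gives
\[
|v_i| \leq \frac{\nmm{v}_1}{C'} \leq \frac{1}{C'} \bigl( \nmm{h}_1 + \nmm{u}_1 \bigr).
\]
For the second term, the crude bound $|u_i| \leq \nmm{u}_1$ will do. Both then reduce to controlling $\nmm{u}_1$, and \eqref{eq:38d} provides
\[
\nmm{u}_1 \leq \frac{c\sqrt{n}}{\sigma_{{\rm min}}} \nmm{Au} = \frac{c\sqrt{n}}{\sigma_{{\rm min}}} \nmm{Ah}.
\]

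Combining these, I would obtain an estimate of the form
\[
|h_i| \leq \frac{\nmm{h}_1}{C'} + \left( \frac{1}{C'} + 1 \right) \frac{c\sqrt{n}}{\sigma_{{\rm min}}} \nmm{Ah},
\]
which is precisely the hypothesis \eqref{eq:38a} of Lemma \ref{lemma:1} with $\alpha = C'$ and $\beta = \bigl(1/C' + 1\bigr) \cdot c\sqrt{n}/\sigma_{{\rm min}}$. Then for any $k < C'/2 = \alpha/2$, Lemma \ref{lemma:1} immediately yields the RNSP of order $k$ with the constants given by \eqref{eq:38b}, namely $\rho = k/(C' - k)$ and $\tau$ expressible in terms of $\beta$ as in the statement.

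There is no real obstacle here; the proof is essentially a drop-in substitution. The only point worth double-checking is the hypothesis $\alpha > 2$ of Lemma \ref{lemma:1}: since $C'$ as defined in \eqref{eq:36} or \eqref{eq:37} satisfies $C' \geq 2$ for any $d_L \geq 2$ and any admissible $g \geq 6$, and since the condition $k < C'/2$ combined with $k \geq 1$ forces $C' > 2$, the lemma applies. Everything else is bookkeeping, and the resulting bounds strictly improve on Theorem \ref{thm:1} whenever the girth exceeds $8$, which is the entire point of invoking Theorem \ref{thm:LX2}.
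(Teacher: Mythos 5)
Your proof is correct and is exactly the argument the paper intends: the paper omits the proof of Theorem \ref{thm:2}, stating only that it is entirely analogous to that of Theorem \ref{thm:1} with the bound of Theorem \ref{thm:LX2} replacing that of Theorem \ref{thm:LX1}, which is precisely your drop-in substitution of $\al = C'$ and $\beta = (1/C'+1) c\sqrt{n}/\s_{{\rm min}}$ into Lemma \ref{lemma:1}. The only caveat is that Lemma \ref{lemma:1} then yields $\t = C' k \beta/(C'-k)$, so the fraction printed in \eqref{eq:38f} appears to be inverted (a typo in the statement, by analogy with \eqref{eq:38e}), and your remark that $\t$ is ``as in the statement'' should be read with that correction.
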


The proof of Theorem \ref{thm:2} is entirely analogous to that of
Theorem \ref{thm:1}, with the bound in Theorem \ref{thm:LX2} replacing
that in Theorem \ref{thm:LX1}.
Therefore the proof is omitted.

The results in Theorem \ref{thm:1} lead to sharper bounds for the
sparsity count compared to using RIP and coherence bounds.
This is illustrated next.

\begin{example}\label{exam:31}
Suppose $A \in \bimn$ is left-regular with degree $d_L$ and with the
inner product between any two columns bounded above by $\l$.
Then it is easy to see that the coherence $\mu$ of $A$ is bounded by $\l/d_L$.
Therefore, if we use Theorem \ref{thm:opt}, then it follows that $(A,\DBP)$ 
achieves robust $k$-sparse recovery whenever
\bd
k < \left\lfloor \frac{ 2 d_L}{3 \sqrt{3} \l } + \frac{2}{3} \right\rfloor .
\ed
In contrast, if we use Theorem \ref{thm:1}, it follows that
$(A,\DBP)$ achieves robust sparse recovery whenever $k < d_L/\l$,
which is an improvement by a factor of roughly $3 \sqrt{3}/2 \approx 2.6$.
\end{example}

\section{Lower Bounds on the Number of Measurements}\label{sec:Lower}

Theorem \ref{thm:LX2} shows that, for a fixed left degree $d_L$,
as the girth of the graph corresponding to
$A$ becomes larger, so does the constant $C'$.
Therefore, as the girth of $A$ increases, so does the upper bound on $k$
as obtained from Theorem \ref{thm:2}.
This suggests that, for a given left degree $d_L$ and
number of input nodes $n$, it is better to choose graphs of large girth.
However, as shown next,
as the girth of a graph is increased, the number of measurements $m$
also increases.
As shown below, the ``optimal'' choice for the girth is actually $6$.

Observe from Theorem \ref{thm:2} and specifically
\eqref{eq:38f}, that the pair $(A,\DBP)$ achieves robust $k$-sparse
recovery whenever $\r < 1$, or equivalently $k < C'/2$.
From the definition of $C'$, this bound on the sparsity count
for which robust $k$-sparse recovery is guaranteed can be written as
\be\label{eq:36a}
k < \sum_{i=0}^t (d_L-1)^i ,
\ee
if $g = 4t+2$ and
\be\label{eq:37a}
k < \sum_{i=0}^{t-1} (d_L-1)^i ,
\ee
and if $g = 4t$.
Let us define
\be\label{eq:38}
\kb := \left\{ \ba{ll}
(d_L - 1)^t & \mbox{if } g = 4t + 2 , \\
(d_L - 1)^{t-1} & \mbox{if } g = 4t .
\ea \right.
\ee
It is recognized that $\kb$ is just the last term in the summations
in \eqref{eq:36a} and \eqref{eq:37a}.
Moreover, unless $d_L$ is quite small, the difference between $\kb$
and the summations in \eqref{eq:36a} and \eqref{eq:37a} will be rather small.
Thus we use $k < \kb$ as an easily analyzable
and quite reasonable, approximation
to the actual upper bounds on the sparsity count $k$ given
in \eqref{eq:36a} and \eqref{eq:37a}.

It is clear that if we choose the matrix $A$ to have higher and higher
girth, the bound $\kb$ also becomes higher.
So the question therefore becomes: What happens to $m$, the number
of measurements, as the girth is increased?
The answer is given next.

\begin{theorem}\label{thm:3}
Suppose $A \in \bimn$ is $d_L$-left
regular graph with $m \leq n$ and that every row and every column of $A$
contains at least two ones.
If the girth $g$ of $A$ equals $4t+2$, then
\be\label{eq:39}
m \geq \kb^{2/(t+1)} n^{t/(t+1)} ,
\ee
whereas if $g = 4t$ for $t \geq 2$, then
\be\label{eq:310}
m \geq \kb^{(2t-1)/[t(t-1)]} n^{(t-1)/t} .
\ee
\end{theorem}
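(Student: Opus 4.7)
The plan is to establish a Moore-type vertex-counting bound via breadth-first search, rooting the BFS at a vertex whose side of the bipartition is chosen according to the parity of the girth.

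For $g=4t+2$ I would fix a right vertex $r_0$ and perform BFS from it. The girth hypothesis guarantees that the BFS is tree-like up to depth $2t$, so every vertex at distance at most $2t$ from $r_0$ is distinct. Left-regularity contributes an exact factor $d_L-1$ at each step from a left vertex back to the right (by non-backtracking), while each step from an intermediate right vertex $r_i$ contributes $d_{r_i}-1\ge 1$ (the row-degree hypothesis ensures this is nontrivial). In the biregular idealization $d_r\equiv\bar{d}_R:=nd_L/m$, the number of right vertices reached at depth $2t$ is $\bar{d}_R(d_L-1)^t(\bar{d}_R-1)^{t-1}$; this cannot exceed $m$, and substituting $\bar{d}_R$ and solving yields $m^{t+1}\ge n^t d_L^t(d_L-1)^t$. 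After bounding $d_L(d_L-1)\ge(d_L-1)^2=\kb^{2/t}$ one obtains the claimed $m\ge\kb^{2/(t+1)}n^{t/(t+1)}$.

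For $g=4t$ the BFS is tree-like only to depth $2t-1$, so I would root it at a left vertex $v_0$ instead, so that this deepest usable level lies on the right side. The count at depth $2t-1$ in the biregular idealization is $d_L(d_L-1)^{t-1}(\bar{d}_R-1)^{t-1}\le m$, which upon substituting $\bar{d}_R=nd_L/m$ and solving gives $m^t\ge n^{t-1}d_L^t(d_L-1)^{t-1}$; combined with $\kb=(d_L-1)^{t-1}$ this rearranges to $m\ge\kb^{(2t-1)/[t(t-1)]}n^{(t-1)/t}$. The restriction $t\ge 2$ is visible here, since for $t=1$ the exponent $(2t-1)/[t(t-1)]$ is singular.

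The main obstacle is making the biregular idealization rigorous, since the right degrees $d_{r_i}$ along any single BFS branch actually vary. I would handle this by replacing the single-vertex tree count by the aggregate $W_\ell$ of non-backtracking walks of length $\ell$ summed over all starting right (resp.\ left) vertices: left-regularity yields the exact multiplicative recursion $W_{\ell+1}=(d_L-1)W_\ell$ at odd $\ell$, while at even $\ell$ the step $W_{\ell+1}=\sum_r W_\ell(r)(d_r-1)$ can be bounded below iteratively using $\sum_r d_r^2\ge m\bar{d}_R^2$ (Jensen's inequality applied to $\sum_r d_r=nd_L$), which recovers the biregular count up to constants. The matching upper bounds $W_{2t}\le m(m-1)$ in the first case and $W_{2t-1}\le nm$ in the second follow because the girth assumption forces walks of length at most $2t$ (resp.\ at most $2t-1$) from any fixed vertex to have distinct endpoints, and this closes the argument in both cases.
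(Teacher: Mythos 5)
Your outer skeleton---the BFS is tree-like to depth $g/2-1$, distinct endpoints give $W_{2t}\le m(m-1)$ and $W_{2t-1}\le nm$, then $\db_R-1\ge(n/m)(d_L-1)$ (using $m\le n$) and the final algebra---is fine and parallels the paper, but note the paper never proves the Moore-type count itself: it quotes Hoory's inequality \eqref{eq:311} (Theorem \ref{thm:Hoory}, from \cite{Hoory-JCTB02}), keeps only the last summand, and rearranges. Your proposal amounts to reproving Hoory's bound, and the gap sits at exactly its hard point: the per-step lower bound $W_{\ell+1}=\sum_r W_\ell(r)(d_r-1)\ge(\db_R-1)W_\ell$ at even $\ell$. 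That step is valid at $\ell=2$, because left-regularity gives $W_2(r)=d_r(d_L-1)$, exactly proportional to $d_r$, so $\sum_r d_r(d_r-1)\ge E(\db_R-1)$ closes it. For $\ell\ge 4$, however, $W_\ell(r)$ is no longer proportional to $d_r$, and the inequality $\sum_r d_r^2\ge m\db_R^2$ says nothing about the needed positive correlation between $W_\ell(r)$ and $d_r$. In fact the per-step inequality can fail: take one right ``hub'' of very large degree, all other right degrees equal to $2$, and $n\gg m$; length-$4$ non-backtracking walks then end overwhelmingly at degree-$2$ vertices (they typically leave the hub at step $3$), so $W_5/W_4$ stays near $1$ while $\db_R-1$ is large. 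Hoory's bound survives such examples only because the large factor $d_{\rm hub}-1$ was already collected in $W_3/W_2$; the compensation happens across steps, which is precisely why the known proofs (Hoory, and Alon--Hoory--Linial in the non-bipartite setting) use a global argument about the product of degrees along a non-backtracking walk rather than step-wise averaging.

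So, as written, the argument does not close. Either import that machinery in full, or do what the paper does: invoke \eqref{eq:311} as a black box, keep its last term $(\db_L-1)^{\lceil(r-1)/2\rceil}(\db_R-1)^{\lfloor(r-1)/2\rfloor}$, bound $\db_R-1\ge(n/m)(\db_L-1)$, and rearrange to $m^{t+1}\ge\kb^2 n^t$ for girth $4t+2$ and $m^t\ge\kb^{(2t-1)/(t-1)}n^{t-1}$ for girth $4t$; your concluding algebra already matches these.
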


The proof of Theorem \ref{thm:3} is based on
the following result \cite[Equations (1) and (2)] {Hoory-JCTB02}:

\begin{theorem}\label{thm:Hoory}
Suppose $A \in \bi^{m \times n}$ with $m < n$.
Suppose further that in the bipartite graph associated with $A$, every 
node has degree $\geq 2$.\footnote{This is equivalent to the requirement
that every row and every column of $A$ contains at least two ones.}
Let $E$ denote the total number of edges of the graph and define
$\db_L = E/n, \db_R = E/m$ to be the average left-node degree and average
right-node degree, respectively.
Suppose finally that the graph has girth $g = 2r$.
Then
\be\label{eq:311}
m \geq \sum_{i=0}^{r-1} ( \db_L - 1 )^{\lceil i/2 \rceil}
( \db_R - 1 )^{\lfloor i/2 \rfloor} .
\ee
\end{theorem}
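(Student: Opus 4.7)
The plan is to establish \eqref{eq:311} as a Moore-type bipartite-graph bound via a BFS counting argument rooted at a random edge, with a Jensen-type convexity step that converts vertex-specific degrees into the average degrees $\db_L$ and $\db_R$.

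First, I would pick a uniformly random oriented edge $e = (u,v)$ with $u$ on the left side (size $n$) and $v$ on the right side (size $m$), and grow two breadth-first search trees $T_u$ and $T_v$, where $T_u$ is rooted at $u$ and avoids the edge $uv$, and $T_v$ is rooted at $v$ and avoids the edge $uv$. The girth hypothesis $g = 2r$ guarantees that both trees, truncated at depth $r-1$, are genuine trees and are vertex-disjoint: collisions within a single tree at depth $i$ would create a cycle of length $2i \leq 2r - 2 < 2r$, and a collision between the two trees at respective depths $i_u, i_v \leq r-1$ would create a cycle of length $i_u + i_v + 1 \leq 2r - 1 < 2r$, both ruled out by girth.

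Second, I would count the vertices at depth $i$ in each tree via the standard recursion $N_0 = 1$ and $N_{i+1}(T) = \sum_{\text{level-}i\text{ vertex }w}(\deg(w) - 1)$, where the $-1$ accounts for the single edge from $w$ back toward the root. Within $T_u$, depths $0,2,4,\ldots$ lie on the left and depths $1,3,\ldots$ lie on the right; within $T_v$ the parities are reversed. Summing the right-vertex contributions from both trees yields the lower bound $m \geq \sum_{i=0}^{r-1} |T_u^i \cap R| + |T_v^i \cap R|$, with an analogous bound for $n$.

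Third, I would take expectations over the random oriented edge (drawn uniformly from the $2E$ oriented edges) and invoke the elementary convexity inequality
$$\frac{1}{|S|}\sum_{x\in S} f(x)\bigl(f(x)-1\bigr) \;\geq\; \bar f\bigl(\bar f - 1\bigr)$$
iteratively as we descend. Each application rewrites a BFS-branch sum $\sum (\deg - 1)$ as a sum over edges of the graph and then converts it into a $(\db_L - 1)$ or $(\db_R - 1)$ factor, depending on which side of the bipartition is being probed. Combined with the identity $n \db_L = m \db_R = E$ (which eliminates explicit vertex counts arising from the averaging), the expected level counts assemble, summand by summand, into the product form $(\db_L - 1)^{\lceil i/2 \rceil}(\db_R - 1)^{\lfloor i/2 \rfloor}$ appearing in \eqref{eq:311}.

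The main technical obstacle is the inductive application of the convexity inequality at each descending level: at depth $i$ the relevant count is not a single vertex's neighborhood size but a sum over \emph{all} leaves of the previous level, so one must first re-express it as a bilinear sum over the edges of $G$ and then apply Cauchy--Schwarz to the appropriate side's degree sequence. Tracking the alternation of $\db_L$ and $\db_R$ factors through the parities of the depth, while pooling the right-vertex contributions from $T_u$ and $T_v$ in the correct way, is what produces the exponents $\lceil i/2 \rceil$ and $\lfloor i/2 \rfloor$ in the final bound; the hypothesis that every vertex has degree at least two is used precisely to ensure that the $(\deg - 1)$ factors are never zero, so the BFS does not terminate before depth $r-1$.
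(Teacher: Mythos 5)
You should first note that the paper does not prove this statement at all: Theorem~\ref{thm:Hoory} is quoted verbatim from \cite[Equations (1) and (2)]{Hoory-JCTB02} and used as a black box in the proof of Theorem~\ref{thm:3}, so there is no in-paper argument to compare against. Judged on its own terms, your proposal has the right architecture (it is essentially the architecture of Hoory's and Alon--Hoory--Linial's proofs): root a BFS/non-returning-walk structure at a uniformly random oriented edge, use the girth hypothesis to show that all non-returning walks of length $t\leq r-1$ from a fixed starting edge end at distinct vertices, and then average over the starting edge so that the right-hand side of \eqref{eq:311} emerges from the total count of non-returning walks. The girth/collision arithmetic in your first two steps is correct, and the degree-$\geq 2$ hypothesis is indeed what keeps the branching factors positive.

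The gap is in the third step, and it is not a technicality --- it is the entire content of the theorem. After averaging over the starting edge, what you must show is that the total number of non-returning walks of length $t$ in the graph is at least $2E\,(\db_L-1)^{\lceil t/2\rceil}(\db_R-1)^{\lfloor t/2\rfloor}$ (suitably oriented). Your displayed convexity inequality proves exactly the case $t=1$: $\sum_u \deg(u)(\deg(u)-1)\geq n\,\db_L(\db_L-1)$. It does not iterate. For $t\geq 2$ the quantity to be bounded is $\sum_{\text{walks of length }t-1}(\deg(v_{t-1})-1)$, and the terminal vertex of a random non-returning walk of length $t-1$ is \emph{not} distributed according to the degree-biased (or uniform) measure on its side of the bipartition --- its law depends on the products of $(\deg(\cdot)-1)$ along all interior vertices of the walk. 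Jensen applied level by level therefore does not produce the factor $(\db-1)$ at each descent; low-degree vertices can be systematically over-represented at a given level, which is precisely why the irregular Moore bound was open before \cite{Hoory-JCTB02}. Your proposed fix (``re-express as a bilinear sum over edges and apply Cauchy--Schwarz'') names the obstacle but does not overcome it: Hoory's proof requires a separate, genuinely nontrivial lemma controlling the distribution of the non-returning random walk (equivalently, a lower bound on a multilinear form in the degree sequence that does not follow from termwise convexity). As written, your argument proves the $i=0$ and $i=1$ terms of \eqref{eq:311} and asserts the rest.
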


It is important to note that the above theorem does not require any
assumptions about the underlying graph (e.g., regularity).
The only assumption is that every node has degree two or more, so as to
rule out trivial cases.
Usually such theorems are used to find upper bounds on the girth of a
bipartite graph in terms of the numbers of its nodes and edges
(as in Theorem \ref{thm:4} below).
However, we turn it around here and use the theorem to find a lower
bound on $m$, given the integers $n$ and $g$.
Note that if $g = 4$, then $r = 2$ and the bound \eqref{eq:311} becomes
$m \geq \db_L$, which is trivial.
In fact $m$ has to exceed the \textit{maximum} degree of any left node.
However, for $g \geq 6$, the bound in \eqref{eq:311} is meaningful.

\begin{proof}
(Of Theorem \ref{thm:3}:)
The bound \eqref{eq:311} implies that $m$ is no smaller than the
last term in the summation; that is
\be\label{eq:312}
m \geq \db_L^{\lceil (r-1)/2 \rceil} \db_R^{\lfloor (r-1)/2 \rfloor} .
\ee
Because $A$ is assumed to be left-regular, actually $\db_L = d_L$, but
we do not make use of this and will carry the symbol $\db_L$ throughout.
By definition, we have that $\db_R = (n \db_L)/m$.
Therefore, if $n \geq m$, then it follows that
\bd
\db_R - 1 = \frac{n \db_L}{m} - 1 \geq \frac{n \db_L}{m} - \frac{n}{m}
= \frac{n}{m} ( \db_L - 1) .
\ed
Therefore \eqref{eq:312} implies that
\be\label{eq:313}
m \geq ( \db_L - 1 )^\al \left( \frac{n}{m} \right)^{\lfloor (r-1)/2 \rfloor} ,
\ee
where
\bd
\al = \lceil (r-1)/2 \rceil + \lfloor (r-1)/2 \rfloor .
\ed

We treat the cases $g = 4t+2$ and $g = 4t$ separately.
If $g = 4t+2$, then $r = g/2 = 2t+1$, so that
\bd
\lceil (r-1)/2 \rceil = \lfloor (r-1)/2 \rfloor = t , \al = 2t .
\ed
Therefore \eqref{eq:313} becomes
\bd
	m \geq ( \db_L - 1 )^{2t} \left( \frac{n}{m} \right)^t
= \kb^2 \left( \frac{n}{m} \right)^t .
\ed
This can be rearranged as
\bd
m^{t+1} \geq n^t \kb^2 , 
\ed
or
\bd
m \geq \kb^{2/(t+1)} n^{t/(t+1)} ,
\ed
which is \eqref{eq:39}.
In case $g = 4t$, the proof proceeds along entirely parallel lines
and is omitted.
\end{proof}

It is obvious from \eqref{eq:39} that the lower bound is minimized
(for a fixed choice of $n$ and $\kb$) with $t = 1$, or $g = 6$.
Similarly, the lower bound in \eqref{eq:310} is minimized when $t = 2$,
or $g = 8$.
Higher values of $g$ would lead to more measurements being required.
We can also compare $g = 6$ with $g = 8$ and show that $g = 6$ is better.
Let us substitute $t = 1$ in \eqref{eq:39} and $t = 2$ in \eqref{eq:310}.
This gives
\be\label{eq:314}
m \geq \left\{ \ba{ll} \kb n^{1/2} & \mbox{if } g = 6 , \\
\kb^{3/2} n^{1/2} & \mbox{if } g = 8 . \ea \right.
\ee
If we wish to have fewer measurements than the dimension of the unknown
vector, we can set $m < n$.
Substituting this requirement into \eqref{eq:314} leads to
\bd
\kb < n^{1/2} \mbox{ if } g = 6 , \kb < n^{1/3} \mbox{ if } g = 8 .
\ed
Hence graphs of girth 6 are preferable to graphs of girth 8, because the
upper limit on the recoverable sparsity count $\kb$ is higher with a graph
of girth $6$ than with a graph of girth $8$.

\section{Construction of Nearly Optimal Graphs of Girth Six}\label{sec:Girth}

The discussion of the preceding section suggests that we must look for
bipartite graphs of girth six, where the integer $m$ satisfies the bound
\eqref{eq:311} with the $\geq$ replaced by an equality, or at least, 
close to it.
In this section
it is shown
that a certain class of binary matrices has girth six.
Then we give two specific constructions.
The first of these is based on array codes, which are a class of
low density parity check (LDPC) codes and the second is based
on Euler squares.The first construction is easier to explain, but the second one gives
far more flexibility in terms of the number of measurements.
Here is the general theorem.

\begin{theorem}\label{thm:4}
Suppose $A \in \bi^{lq \times q^2}$ for some integers $4 \leq l \leq q-1$.
Suppose further that
\ben
\item $\db_L \geq l$, where $\db_L$ is the average left degree of $A$.
\item The maximum inner product between any two columns of $A$ is one.
\item Every row and every column of $A$ have at least two ones.
\een
Then the girth of $A$ is six.
\end{theorem}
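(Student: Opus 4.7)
The plan is to prove the theorem in two steps: first show that the girth is at least six using condition (2), then rule out girth $\geq 8$ by applying Theorem \ref{thm:Hoory} to obtain a lower bound on $m$ that contradicts $m = lq$.

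First I would observe that the bipartite graph associated with $A$ is simple and has every vertex of degree at least two (by condition (3)), so every cycle has even length $\geq 4$. A 4-cycle consists of two distinct left nodes sharing two common right-neighbors, which in terms of columns means two columns of $A$ share at least two common rows of ones, i.e., their inner product is at least $2$. Condition (2) rules this out, so the girth is at least $6$.

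Next I would assume for contradiction that the girth $g = 2r$ satisfies $r \geq 4$. From the handshake identity $n \db_L = m \db_R$ with $n = q^2$ and $m = lq$, we get $\db_R = q \db_L / l$, and since $\db_L \geq l$ by condition (1), it follows that $\db_R \geq q$. Applying Theorem \ref{thm:Hoory} and retaining only the first four terms of the sum yields
\begin{equation*}
m \;\geq\; 1 + (\db_L-1) + (\db_L-1)(\db_R-1) + (\db_L-1)^2(\db_R-1) \;\geq\; 1 + (l-1) + (l-1)(q-1) + (l-1)^2(q-1),
\end{equation*}
which factors as $l\bigl[\,1 + (l-1)(q-1)\,\bigr]$. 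Comparing with $m = lq$, the desired contradiction $l[1+(l-1)(q-1)] > lq$ reduces after cancellation to $(l-1)(q-1) > q-1$, i.e.\ $l > 2$, which holds since $l \geq 4$. This forces $m > lq$, contradicting $m = lq$, so we must have $r < 4$, i.e.\ girth $< 8$. Combined with girth $\geq 6$ and evenness, the girth equals $6$.

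The main obstacle is making the Hoory lower bound visibly exceed $lq$; the delicate aspect is that condition (1) only gives an inequality on the \emph{average} left degree, not equality, so one must avoid assuming regularity and instead use the handshake identity to propagate $\db_L \geq l$ into $\db_R \geq q$. The slight bookkeeping point worth checking is that Theorem \ref{thm:Hoory}'s hypothesis $m < n$ is satisfied (which holds since $lq < q^2$ as $l \leq q-1$) and that every node has degree $\geq 2$ (which is exactly condition (3)). Once these hypotheses are verified, the rest is the straightforward arithmetic above.
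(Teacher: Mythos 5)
Your proof is correct, and it executes the second half of the argument differently from the paper. Both proofs get girth $\geq 6$ from condition (2) and then invoke Theorem \ref{thm:Hoory} to exclude girth $\geq 8$, but the paper keeps only the \emph{last} term of Hoory's sum, substitutes $\db_R - 1 \geq (n/m)(\db_L-1)$, and then runs a case analysis on $g=4t$ versus $g=4t+2$ (with $g=8$ handled as a separate subcase via $(l-1)^3 \leq l^2$, which is where the hypothesis $l \geq 4$ is really used). You instead truncate Hoory's sum after its first four terms, use the handshake identity to get $\db_R = q\db_L/l \geq q$, and obtain the single uniform bound $m \geq l\bigl[1+(l-1)(q-1)\bigr] > lq$ for every girth $\geq 8$, which kills all cases at once with no split on $g \bmod 4$; as a bonus, your inequality only needs $l>2$, so your argument even covers $l=3$, which the paper's last-term estimate does not. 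Your bookkeeping on the hypotheses of Theorem \ref{thm:Hoory} ($m<n$ since $l\leq q-1$, and minimum degree $2$ from condition (3), which also guarantees a cycle exists so the girth is finite) is exactly what is needed. The trade-off is minor: the paper's last-term estimate is the same device it already set up for Theorem \ref{thm:3}, so its proof reuses that machinery, while yours is shorter and slightly stronger.
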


\textbf{Remark:}
Before proving the theorem,
let us see how closely such a matrix satisfies the inequality \eqref{eq:311}.
In the constructions below we have that $\db_L = d_L = l$, $g = 6$ and $r = 3$.
Therefore the bound in \eqref{eq:311} becomes
\bd
m \geq 1 + (l-1) + (l-1)(q-1) = q(l-1) + 1 .
\ed
Since $m = lq$, we see that the actual value of $m$ exceeds the lower bound 
for $m$ by a factor of $l/(l-1)$ (after neglecting the last term of $1$
on the right side).
Note that there is no guarantee that the lower bound in Theorem
\ref{thm:1} is actually achievable.
So the class of matrices proposed above (if they could actually be
constructed), can be said to be ``near optimal.''
In applying this theorem, we would choose $q$ such that $n \leq q^2$
and choose any desired $l \leq q-1$.
With such a measurement matrix, basis pursuit will achieve robust $k$-sparse
recovery up to $k < \lceil \sqrt{n} \rceil_p$,
where $\lceil x \rceil$ denotes the smallest \textit{prime number}
larger than or equal to $x.$

\begin{proof}
Let $g$ denote the girth of $A$.
Then Condition (2) implies that $g \geq 6$.
Condition (3) implies that the bound \eqref{eq:311} applies with $m = lq$,
$n = q^2$, $n/m = q/l$.
Let $g = 2r$ and define
\bd
\al = \lceil (r-1)/2 \rceil + \lfloor (r-1)/2 \rfloor ,
\beta = \lfloor (r-1)/2 \rfloor .
\ed
Then the inequality \eqref{eq:311} implies that
\bd
lq \geq (\db_L - 1)^\al (q/l)^\beta \geq (l-1)^\al (q/l)^\beta .
\ed
This can be rewritten as
\be\label{eq:B1}
(l-1)^\al \frac{q^{\beta - 1} }{l^{\beta+1} } \leq 1 .
\ee
Note that $g \geq 6$, so that $r \geq 3$, due to Condition (2).
We study two cases separately.

\textbf{Case (1)}: $g = 4t$ for some $t \geq 2$.
In this case
\bd
(r-1)/2 = t - 1/2 , \lceil (r-1)/2 \rceil = t, \lfloor (r-1)/2 \rfloor = t-1 ,
\ed
\bd
\al = 2t-1 , \beta = t-1 .
\ed
Therefore \eqref{eq:B1} becomes
\be\label{eq:B2}
(l-1)^{2t-1} \frac{q^{t-2}}{l^t} \leq 1 ,
\ee
or
\bd
q^{t-2} (l-1)^{t-1} \leq \left( \frac{l}{l-1} \right)^t \leq 2^t ,
\ed
because $l/(l-1) \leq 2$ for $l \geq 2$.
Also
\bd
q^{t-2} (l-1)^{t-1} \geq q^{t-2} (l-1)^{t-2} = [q(l-1)]^{t-2} .
\ed
Combining these inequalities gives
\bd
[q(l-1)]^{t-2} \leq 2^t , 
\ed
or
\be\label{eq:B3}
\left[ \frac{q(l-1)}{2} \right]^{t-2} \leq 2^2 = 4 .
\ee
It is shown that \eqref{eq:B3} cannot hold if $t \geq 3$.
If $t \geq 3$, then
\bd
\frac{q(l-1)}{2} \leq \left[ \frac{q(l-1)}{2} \right]^{t-2} \leq 4 ,
\ed
or $q(l-1) \leq 8$.
However, $q \geq 5$ and $l-1 \geq 3$, so this inequality cannot hold.
At this point, let us consider the possibility that $g = 8$, i.e., that $t = 2$.
In this case \eqref{eq:B2} becomes
\bd
(l-1)^3 \frac{1}{l^2} \leq 1 , \mbox{ or } (l-1)^3 \leq l^2 .
\ed
This inequality can hold only for $l = 1, 2, 3$ and not if $l \geq 4$.
Hence $A$ cannot have girth $4t$ for any $t \geq 2$.

\textbf{Case (2)}: $g = 4t+2$ for some $t \geq 1$.
In this case \bd
\lceil (r-1)/2 \rceil = \lfloor (r-1)/2 \rfloor = t ,
\al = 2t, \beta = t .
\ed
So \eqref{eq:B1} becomes
\be\label{eq:B4}
(l-1)^{2t} \frac{q^{t-1}}{l^{t+1}} \leq 1 .
\ee
As before, this can be rewritten as
\bd
q^{t-1} (l-1)^{t-1} \leq \left( \frac{l}{l-1} \right)^{t+1} \leq 2^{t+1} ,
\ed
or
\be\label{eq:B5}
\left[ \frac{q(l-1)}{2} \right]^{t-1} \leq 2^2 = 4 .
\ee
This inequality can hold if $t = 1$ because the left side equals $1$.
However, if $t > 1$, then \eqref{eq:B5} implies that
\bd
\frac{q(l-1)}{2} \leq \left[ \frac{q(l-1)}{2} \right]^{t-1} \leq 4 ,
\ed
or $q(l-1) \leq 8$, which is impossible.
Hence \eqref{eq:B5} implies that $t = 1$, or that $g = 6$.
\end{proof}

In what follows, we present two explicit constructions of binary matrices that
satisfy the conditions of Theorem \ref{thm:4}.
The first construction is taken from the theory of low density
parity check (LDPC) codes and is a generalization of \cite{Yang-Hell-TIT03}.
This type of construction for Low Density 
Parity Check codes (LDPC) was first introduced in \cite{Fan00}.
Let $q$ be a prime number and let $P \in \bi^{q \times q}$ be any
cyclic permutation of $[q]$.
In \cite{Yang-Hell-TIT03} $P$ is taken as
the shift permutation matrix defined by $P_{i,i-1} = 1$ and the rest zeros,
where $i-1$ is interpreted modulo $q$.
Then $P^q = I$, the identity matrix.
Let $l < q$ be any integer and define the matrix $H(q,l) \in
\bi^{lq \times q^2}$ as the block-partitioned matrix $[ M_{ij} ] ,
i \in [l] , j \in [q]$, where
\be\label{eq:41}
M_{ij} = P^{(i-1)(j-1)} .
\ee
More elaborately, the matrix $H(q,l)$ is given by
\be\label{eq:42}
H(q,l) = \left[ \ba{ccccc}
I & I & I & \ldots & I \\
I & P & P^2 & \ldots & P^{q-1} \\
I & P^2 & P^4 & \ldots & P^{2(q-1)} \\
\vdots & \vdots & \ddots & \vdots & \vdots \\
I & P^{l-1} & P^{2(l-1)} & \ldots & P^{(l-1)(q-1)} 
\ea \right] .
\ee
The matrix $H(q,l)$ is bi-regular, with left (column) degree $l$ and
right (row) degree $q$.
It is rank-deficient, having rank $(q-1)l + 1$.
In principle we could drop the redundant rows, but that would destroy
the left-regularity of the matrix, thus rendering the theory in this
paper inapplicable.
(However, the resulting matrix would still be right-regular.)
Moreover, due to the cyclic nature of $P$, it follows that
the inner product between any two columns of $H(q,l)$ is at most equal to one.

It is shown in \cite[Proposition 1]{Yang-Hell-TIT03} that $H(q,l)$ has
girth six, but here that statement follows from Theorem \ref{thm:4}.

The second construction is based on Euler squares.
In \cite{MacNeish-AM22}, a general recipe is given for constructing
generalized Euler squares.
This is used in \cite{Naidu-et-al-TSP16}
to construct an associated binary matrix of order $lq \times q^2$,
where $q$ is any arbitrary integer (in contrast with the construction
of \cite{Yang-Hell-TIT03}, which requires $q$ to be a prime number),
such that the maximum inner
product between any two columns is at most equal to one.
Again, by Theorem \ref{thm:4}, such matrices have girth six and
are thus nearly optimal for compressed sensing.
The upper bound on $l$ is defined as follows:
Let $q = 2^{r_0} p_1^{r_1} \ldots p_s^{r_s}$ be the prime number decomposition of $q$.
Then $l < \min \{ 2^{r_0}, p_1^{r_1}, \ldots , p_s^{r_s} \}$.
In particular if $q$ is a prime or a power of a prime, then
we can have $l < q - 1$.
It is easy to verify that, if $q$ is a prime, then
the construction in \cite{Naidu-et-al-TSP16} is the same as 
the array code construction of \cite{Yang-Hell-TIT03} with permuted columns.
For the case, where $q$ is a prime power, the construction is more elaborate 
and is not pursued further here.

\begin{example}\label{exam:81}
In this example we compare the number of samples required when
using the DeVore construction of \cite{DeVore07}
and a matrix that satisfies the hypotheses
of Theorem \ref{thm:4}, such as the array code matrix or the Euler square
matrix.
The conclusions are that: (i) 
When $k < \sqrt{n}/4$, the Devore construction
requires fewer measurements than the array code,
whereas when $\sqrt{n}/4 < k < \sqrt{n}$,
the array code type of matrix requires fewer measurements.
(ii) When $k > \sqrt{n}/2$, the DeVore construction requires
more measurements than $n$, the dimension of the unknown vector,
whereas the array code construction has $m < n$ whenever $k < \sqrt{n}$.

To see this, recall that the DeVore construction produces a matrix
of dimensions $q^2 \times q^{r+1}$ with the maximum inner product
between columns equal to $r$ and each column contains $q$ ones.
So if we choose $r = 2$, then $\l$ in Theorem \ref{thm:2} equals $2$,
while $d_L = q$.
Consequently the DeVore matrix satisfies the RNSP of order $k$ whenever
$k < q/2$ and the number of measurements $m_D$ equals $q^2 = 4k^2$,
Thus $m_D < n$ requires that $4k^2 < n$, or $k < \sqrt{n}/2$.
In contrast, a matrix of the type discussed in Theorem \ref{thm:4} has
dimensions $lq \times q^2$, where $n = q^2$ and $l = k+1$.
For this class of matrices, we have $\l = 1$ and $d_L = q$.
This matrix satisfies the RNSP whenever $k = l-1 < q$
and the number of measurements equals $lq = (k+1)q$.
Now $4k^2 < kq$ if and only if $k < q/4 = \sqrt{n}/4$.
Also $m_A = (k+1)q < n = q^2$ whenever $k+1 < q = \sqrt{n}$.
Here, in the interests of simplicity, we ignore the fact that $q$ has
to be a prime number in both cases and various rounding up operations.
\end{example}

\section{Low Girth in Compressed Sensing vs.\ High Girth in Coding Theory}\label{sec:Low}

As shown in the previous section, in compressed sensing left-regular bipartite
graphs of girth six are preferable to graphs with higher girths.
It is easy to understand why graphs of girth four are undesirable.
For left-regular graphs of column degree $d_L$ and girth four,
recovery is guaranteed only for $k < (d_L-1)/2$, whereas for 
left-regular graphs of column degree $d_L$ and girth six, recovery is
guaranteed for $k < d_L$, or twice as large a bound.
However, it is counter-intuitive
that graphs of \textit{still higher} girth are also inferior
to graphs of girth six when it comes to compressed sensing, because in
LDPC coding, the higher the girth, the better the decoding performance.

In order to explain this disparity, we quote
verbatim a comment by one of the reviewers, who said:
\begin{quote}
Although it is correct that in the area of LDPC codes large girth helps in
the limit $n \to \infty$, in practice people use mostly parity-check
matrices with girth six.
The reason for this is that most of the gain is
by going from girth four to girth six.
Going to larger girth is mostly not
worthwhile because of the loss of flexibility in designing parity-check
matrices for the typical values of $n$ of interest.

Intuitively, it is clear that for a given code length, given variable
node degree distribution and given check node distribution, the larger
the required girth, the fewer Tanner graphs there will be.
(Clearly, if the girth requirement is beyond some bound, there will be
no Tanner graph.)
Writing down the relevant constraints is particularly convenient for the
popular class of quasi-cyclic LDPC codes. See, for example
\cite{Fossorier-TIT04,Sma-Von-TIT12}.

Many papers have empirically observed that going from girth four to
girth six brings the most benefit, with limited payoff beyond that.
A mathematical approach to understand this can be found in
\cite[Section 8.3]{Von-Koett-arxiv05},
which is the extended version of \cite{Koett-Von03}.
\end{quote}

The fact is that, while both coding and compressed sensing use binary matrices,
there are some significant differences between them.
In coding, the number of bit-flipping errors $k$ (which is analogous the
sparsity count in compressed sensing) is a linear multiple of $n$, say
$k = \al n$ for some $\al \in (0,1)$.
In this case the universal lower bound from Theorem \ref{thm:12} becomes
$m = O(n \al \ln(1/\al))$ and the challenge is to design codes, where the
number $m$ of parity check bits grows linearly with $n$.
In contrast, in compressed sensing, the emphasis is on the case, where $k$
grows \textit{sub-linearly} with respect to $n$ and the objective is to ensure
that the number of measurements $m$ also grows more slowly than $n$, though
faster than $k$.
In this setting, the \textit{rate of the code} defined as $1 -m/n$ approaches
$1$ as $n$ grows.
For this setting, as shown here, the optimal girth of the bipartite graph
is six.

\section{Numerical Experiments}\label{sec:num}

In this section we carry out various numerical experiments to illustrate
the use of the array code binary matrices proposed in this paper.
The experiments include a comparison of the array code binary matrix and
the DeVore construction of binary matrices from \cite{DeVore07},
with random Gaussian matrcies.
In Section \ref{ssec:num1}, we compute the number of measurements
that are sufficient to \textit{guarantee} the recovery of $k$-sparse
$n$-dimensional vectors, for each of these classes of measurement matrices.
We also compute the CPU time for $\ell_1$-norm minimization to be
performed using each class of matrices.
While the absolute CPU time is not meaningful, the relative values are
indeed meaningful.
In Section \ref{ssec:num2} we study the phenomenon of ``phase transition''
in $\ell_1$-norm minimization, whereby for fixed $n$ and $m$ and
increasing values of $k$, the probability of success on randomly generated
$k$-sparse $n$-vectors suddenly goes from 100\% to 0\%.
We compare numerical results for Array code binary matrices and DeVore
binary matrices with randomly generated Gaussian matrices, for which
a formal theory is available.

\subsection{Guaranteed Recovery}\label{ssec:num1}

In this subsection, we compare the number of measurements $m$ and
the CPU time for $\ell_1$-norm minimization, when $n = 149^2 = 22,201$,
for two different values of $k$, namely $k = 14$ and $k = 69$.
Note that both values of $k$ are smaller than $\sqrt{n}$.
For each of the array code matrix, the DeVore matrix and a random
Gaussian matrix, the number of measurements $m$ is chosen so as to
\textit{guarantee} robust $k$-sparse recovery using basis pursuit.
In the case of the random Gaussian matrix, the failure probability
$\xi$ is chosen as $10^{-9}$ and the number of samples $m$ is
chosen in accordance with Theorem \ref{thm:11}, specifically \eqref{eq:03}.

When $n = 149^2$ and $k = 14$, with the array code matrix we choose
$q = \sqrt{n} = 149$ and
$d_L = k+1 = 15$, which leads to $m = d_L \sqrt{n} = 2,235$ measurements.
With DeVore's construction, we choose $q$ to be the next largest prime
after $2k$, namely $q = 29$ and $m = 29^2 = 841$.
Because $k < \sqrt{n}/4$, the DeVore construction requires fewer measurements
than the array code matrix, as shown in Example \ref{exam:81}.
When $k = 69$, with the array code matrix we choose $d_L = k+1 = 70$
and $m = d_L \sqrt{n} = 10,430$ measurements.
In contrast, with the DeVore construction, we choose $q$ to be the next
largest prime after $2k$, namely $139$, which leads to $m = q^2 = 19,321$.
Because $k > \sqrt{n}/4$, the DeVore construction requires more measurements
than the array code matrix, as shown in Example \ref{exam:81}.
For the random Gaussian matrix, when $k = 14$, Equation
\eqref{eq:03} gives $m = 11,683$.
When $k = 69$, Equation \eqref{eq:03} gives
$m = 44,345$, that is, \textit{more than} $n$.
Therefore using random Gaussian matrices is not meaningful in this case.

The results are shown in Table \ref{table:1}.
From this Table it can be seen that both classes of binary matrices
(DeVore and array code) require significantly less CPU time compared to
random Gaussian matrices.
As shown in Example \ref{exam:81}, the DeVore matrix is to be preferred
when $k < \sqrt{n}/4$ while the array code matrix is to be preferred
when $k > \sqrt{n}/4$.
But in either case, both classes of matrices are preferable to
random Gaussian matrices.

\begin{table}[]
\centering
\caption{Comparison of DeVore, array code and random Gaussian
matrices for $n = 149^2 = 22,201$ and $k = 14, 69$}
\resizebox{0.5\textwidth}{!}{  \begin{tabular}{|c||c|c||c|c||c|c|}

\hline
& \multicolumn{2}{c|}{\textbf{Array Matrix}}
& \multicolumn{2}{c|}{\textbf{DeVore Matrix}} 
& \multicolumn{2}{c|}{\textbf{Gaussian Matrix}}\\
\hline
\textbf{$k$} & \textbf{$m_A$} & \textbf{$T$ in sec.} &
\textbf{$m_D$} & \textbf{$T$ in sec.} &
\textbf{$m_G$} & \textbf{$T$ in sec.}\\
\hline
	14 & 2,235 & 29.014 & 841 & 15.94 & 11,683 & 259,100 \\
69 & 10,430 & 248.5 & 19,321 & 1795 & 44,345 & 692,260 \\
\hline
\end{tabular}}
\label{table:1}
\end{table}

\subsection{Phase Transition Study}\label{ssec:num2}

In this subsection we compare the phase transition behavior of the
basis pursuit formulation with both classes of binary matrices (DeVore
and array code) and random Gaussian matrices.

Suppose we choose integers $n, m < n$, together with a matrix $A$ and
use basis pursuit as the decoder.
If a $k$-sparse vector is chosen at random, we can ask:
What is the probability that $(A,\DBP)$ recovers the vector and how does
it change as $k$ is increased?
We would naturally expect that the probability of success would be 100\%
for $k$ sufficiently small (because various sufficient conditions for
guaranteed recovery would be satisfied) and 0\% for $k$ sufficiently large.
Further, we would expect a gradual drop-off for in-between values of $k$.
The reality however is quite different.
There is a sharp transition between success and failure, which is known
as a phase transition.

To make the discussion precise, let us define two
quantities: $\th := m/n$, which is known as the under-sampling ratio
and $\phi := k/m$, which is known as the oversampling ratio.\footnote{This
terminology is introduced in \cite{Donoho06b} with $m/n$ denoted by $\d$
and $k/m$ denoted by $\r$.
Since these symbols are used to denote different quantities in the
compressed sensing literature, we use $\th$ and $\phi$ instead.}
For fixed $m,n$, let us vary $k$ and make a plot of $\th$ versus $\phi$.
We can compute three quantities:
$\phi_{95}$, which is the value at which the probability of recovering
a random $k$-sparse vector is 95\%, $\phi_{50}$ and $\phi_5$, with
obvious definitions.
The difference $\phi_5 - \phi_{95}$ is called the \textbf{transition width}
and is denoted by $w$.

The phase transition
phenomenon is analyzed theoretically in a series of papers, for
the case, where the measurement matrix $A$ consists of $mn$ independent
samples of Gaussian random variables, using convex polytope theory
\cite{Donoho06b,Donoho-Tanner-PTRSA09}.
A formula is derived for $\phi_{50}$ as a function $\th$,
which might be referred to as the ``transition boundary.''
However, this is not a closed-form formula.
It is further shown that the transition width is roughly equal to $C/\sqrt{n}$,
where $C$ is a constant that does not depend on $n$.
In addition, it is shown through numerical simulations in
\cite{Donoho-Tanner-PTRSA09,Bayati-et-al15,MJGD13}
that a large class of random and deterministic
measurement matrices display the same phase
transition behavior as Gaussian matrices, even though there is as yet
no theoretical analysis for anything other than random Gaussian matrices.

Against this background, it is of interest to study whether the two classes
of binary matrices studied here, namely the array code matrix and the DeVore 
construction, also display the same phase
transition behavior as Gaussian matrices.
Specifically, we study the following questions through numerical simulations:
\ben
\item For a given $\th$, is the 50\% recovery value of $\phi_{50}$ 
more or less the same for all three types of matrices?
\item Is the phase transition width $w$ more or less
the same for all three types of matrices?
\item As $n$ is varied, does the phase transition width vary as $C/\sqrt{n}$
for some constant $C$ that is \textit{independent} of the method used
to generate the measurement matrix?
\item What is the CPU time with each type of binary matrix?
\een

Here we give details of the study.
For Gaussian measurement matrices and the DeVore measurement matrices,
the dimension of the vector $n$ is chosen to be $1024$,
to match the previous literature on the topic.
The phase transition boundary for the Gaussian case is computed using
the software provided by Prof.\ David Donoho.
For the array code class, we chose $n = 961 = 31^2$, which is the nearest
square of a prime number to $1024$.
Once $n$ is chosen, for the Gaussian matrices, \textit{every} value of
$m$ (the number of measurements) is permissible.
However, for each class of binary matrix, there are only certain values of $m$
that are permissible.
For the DeVore class, $m$ equals the square of a prime power 
$q$ such that $m = q^2 < n$.
Thus the permissible choices for $q$ are
\bd
\{ 11, 13, 16, 17, 19, 23, 25, 29, 31 \} .
\ed
Note that we omitted the possibility of $q = 8$ as being too small.
In the case of array matrices $n = 31^2 = q^2$ and the permissible values
of $m$ are $lq$ as $l$ ranges from $2$ to $q-1 = 30$, that is,
$\{ 62, 93 , \ldots , 930 \}$.
For each permissible choice of $m$, an appropriate measurement matrix $A$
is generated.
Once this is done, 100 random $k$-sparse vectors are generated
and $\ell_1$-norm minimization (basis pursuit) is applied to each
random $k$-sparse vector with the measurement matrix of each class.
The optimization is carried out using the CVX package of {\tt Matlab}.

Since there is a great deal of information to be presented,
we first show the results for the DeVore construction of \cite{DeVore07}
in Figure \ref{fig:1} and then the results for the array code construction
in Figure \ref{fig:2}.
These figures show $\phi_5, \phi_{50}$ and $\phi_{95}$ for the two methods.

Then in Figure \ref{fig:3}, we plot the numerically determined
median values $\phi_{50}$ for the two classes of binary matrices (DeVore
and array code), together with the theoretically determined values
from \cite[Figure 1]{Donoho-et-al-PNAS09}.\footnote{We
thank Prof.\ David Donoho for providing the software to reproduce the curve.}
Note that there are two theoretical curves here, corresponding to the
case, where the unknown vector $x$ is $k$-sparse with each nonzero value
equal to $\pm 1$ (blue curve) and where each nonzero value is uniformly
distributed over $[-1,1]$ (magenta curve).
The first case is known as ``random signed vector'' and the second
case is known as ``random bounded vector.''
From Figure \ref{fig:3}, it can be seen that the observed transition boundary
in each of the two binary matrices closely matches the theoretical transition
boundary with Gaussian matrices and random signed vectors.
In contrast, the transition boundary value of $\phi$ (at which the success
ratio is 50\%) with random vectors taking arbitrary values in $[-1,1]$
is much lower with Gaussian matrices than with either of the two binary
matrices.

\begin{figure}
\includegraphics [height=6cm, width=9cm]{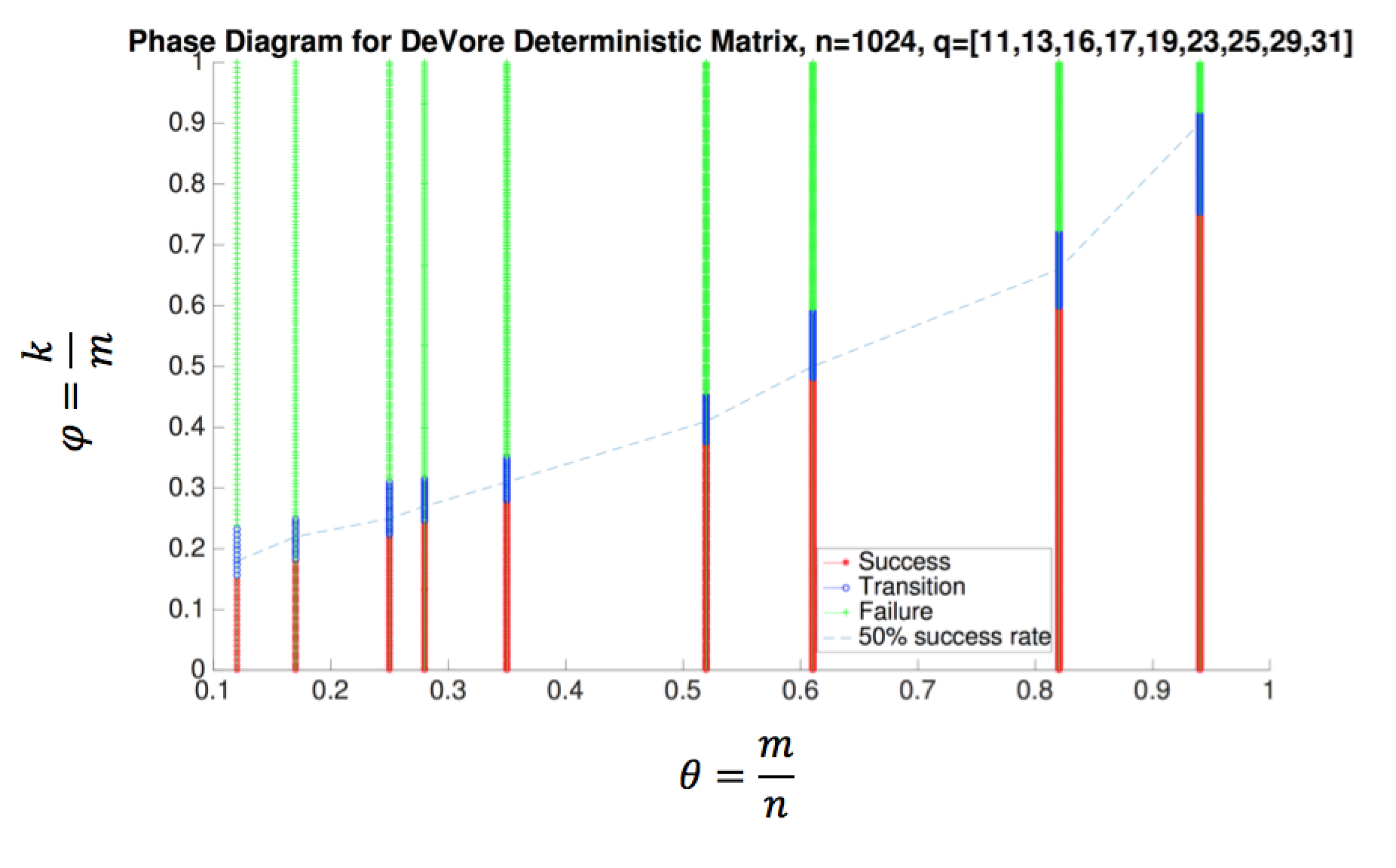}
\caption{Phase transition diagram with success, transition and failure regions for $n=1024$ using DeVore measurement matrix}
\label{fig:1}
\end{figure}

  \begin{figure}
  \includegraphics [height=6cm, width=9cm]{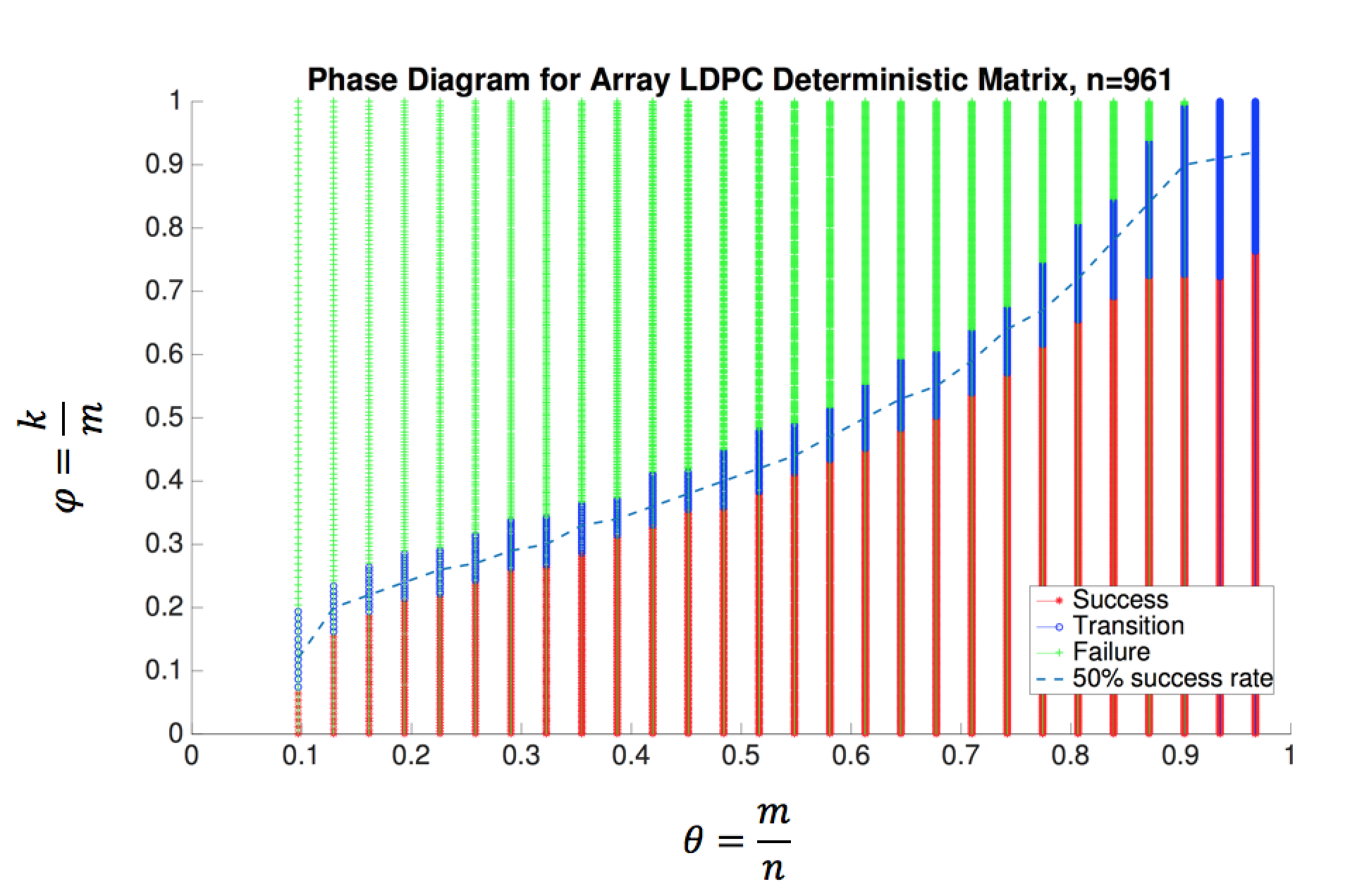}
  \caption{Phase transition diagram with success, transition and failure regions for $n=961$ using array LDPC parity check matrix}
  \label{fig:2}
\end{figure}

\begin{figure}[!htbp]
\includegraphics [height=6cm, width=9cm]{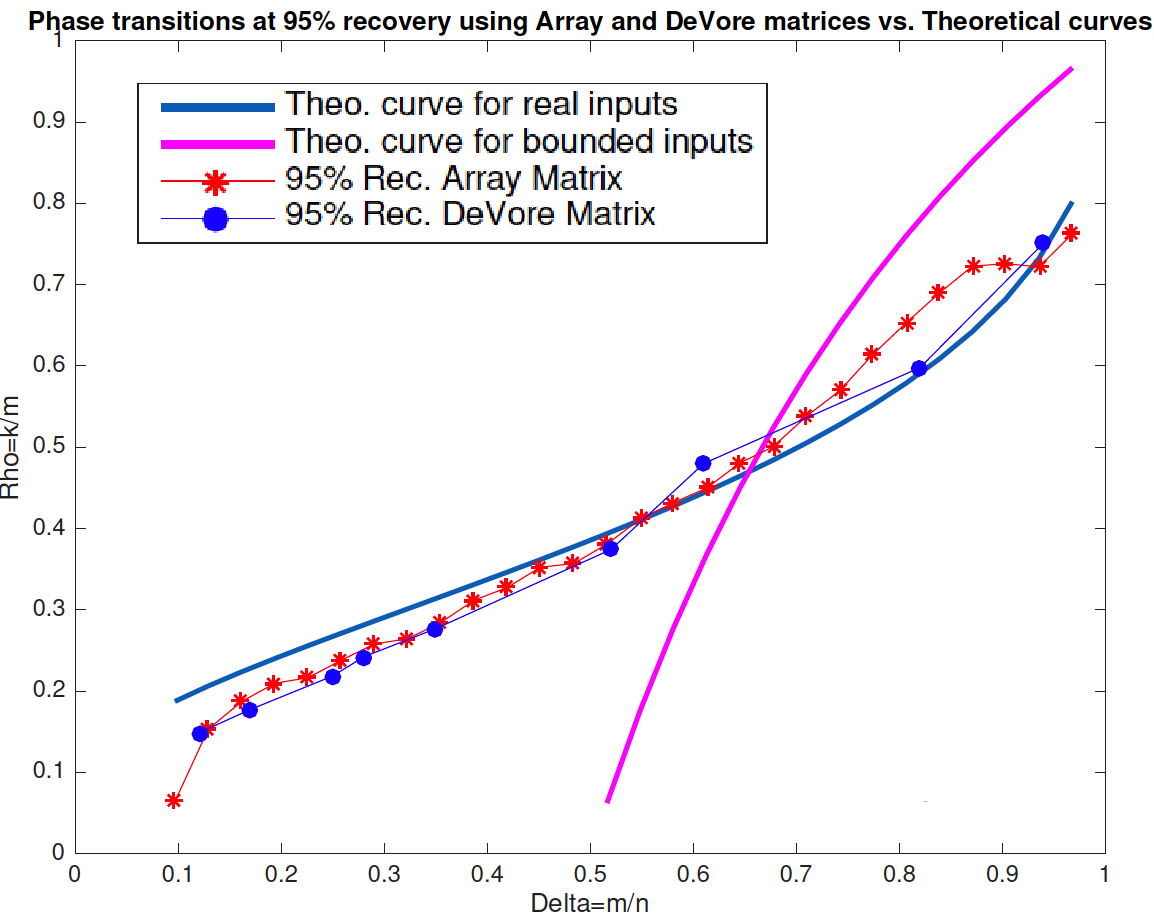}
\caption{Phase transition boundaries for array code binary, DeVore binary
and Gaussian matrices.
For the latter, two boundaries are shown: For signed ($\pm 1$) vectors
and vectors assuming values in $[-1,1]$.
Theoretical curves for real and bounded inputs and 95\% recovery curve using array LDPC parity check matrix and DeVore matrix}
\label{fig:3}
\end{figure}

\setlength\belowcaptionskip{-10pt}

Next we analyze the results shown in these figures.
To make the comparisons between methods readable,
we dispay the results in two separate tables.
Table \ref{table:table1} gives a comparison between the DeVore binary matrices
and random Gaussian matrices.
Table \ref{table:table3} gives a comparison between the array code binary
matrices and random Gaussian matrices.

\begin{table}[]
\centering
\caption{Comparison of transition widths $w$, $50\%$ success rate value
$\phi_{50}$,
and CPU time $T$ for $n=1024$,
using Binary DeVore matrix and Gaussian measurement matrix (subscript $b$ and $g$ respectively) }
\label{table:table1}
\begin{tabular}{|c||c|c||c|c||c|c|}
\hline
$\theta$ & $w_b$  & $w_g$  & $\phi_{50_{b}}$ & $\phi_{50_{g}}$ & $T_b$ in sec. & $T_g$ in sec.  \\ \hline
0.12  & 0.083 & 0.074 & 0.18   & 0.2    & 70   & 182   \\ \hline
0.17  & 0.071 & 0.071 & 0.22   & 0.22   & 106  & 416   \\ \hline
0.25  & 0.09  & 0.078 & 0.25   & 0.27   & 168  & 1435  \\ \hline
0.28  & 0.073 & 0.059 & 0.27   & 0.28   & 222  & 1484  \\ \hline
0.35  & 0.072 & 0.066 & 0.31   & 0.32   & 316  & 5038  \\ \hline
0.52  & 0.08  & 0.07  & 0.41   & 0.39   & 636  & 8695  \\ \hline
0.61  & 0.11  & 0.09  & 0.5    & 0.46   & 1695 & 12810 \\ \hline
0.82  & 0.12  & 0.1   & 0.66   & 0.63   & 1744 & 13453 \\ \hline
0.94  & 0.17  & 0.15  & 0.9    & 0.77   & 2261 & 15827 \\ \hline
\end{tabular}
\end{table}

\begin{table}[]
\centering

\caption{Phase transition widths $w$, $50\%$ success rate width $\phi_{50}$, average width $\bar{w}$ for $n=961$ using array LDPC parity check matrix}
\label{table:table3}
\begin{tabular}{|c|c|c|c|c|}
\hline
\textbf{$\theta$} & \textbf{$m$} & \textbf{$w$} & \textbf{$\phi_{50}$} & \textbf{$T$} in sec. \\ \hline
0.1935         & 186        & 0.08       & 0.24         & 0.9423     \\ \hline
0.2258         & 217        & 0.08       & 0.24         & 0.9351     \\ \hline
0.2581         & 248        & 0.08       & 0.27         & 0.8931     \\ \hline
0.2903         & 279        & 0.08       & 0.29         & 0.8732     \\ \hline
0.3548         & 341        & 0.07       & 0.33         & 0.8458     \\ \hline
0.5161         & 496        & 0.1        & 0.42         & 0.6909     \\ \hline
0.6129         & 589        & 0.1        & 0.5          & 0.5946     \\ \hline
0.8387         & 806        & 0.16       & 0.78         & 0.1818     \\ \hline
0.9355         & 899        & 0.28       & 0.91         & 0.0385     \\ \hline
\textbf{$\bar{w}$}     & \multicolumn{4}{c|}{0.1144}                         \\ \hline
\end{tabular}
\end{table}

Next, we compute the transition width ($\phi_5 - \phi_{95}$) for various
values of $\th$, for three different values of $n$ namely $256, 512$
and $1,024$, using the DeVore binary matrix.
The objective is to determine whether the transition width varies as
$C_1/\sqrt{n}$ for some constant $C_1$ that is independent of $n$.
For a fixed choice of $n$, for each (permissible) value of $\th$, we compute
the transition width $w$ and see how constant it is with respect to $\th$.
It can be seen from the table that indeed $w$ is relatively constant even
as $\th$ varies.
Then we averaged the various values of $w$ over $\th$ for each fixed $n$,
to arrive at an average transition width, shown as $\bar{w}$ in
the table.
Then we computed the ratio $\bar{w}/\sqrt{n}$ for the three values of $n$
and called it $C_1$.
The expectation is that this constant $C_1$ should be independent of $n$.
In reality, the values of $C_1$ for $n = 256$ and $512$ are quite close,
while that for $n = 1,024$ is noticeably higher.

\begin{table}[]

\hspace{30em}
\vspace{2em}
\caption{Phase transition widths $w$,
$50\%$ success rate value $\phi_{50}$
and the constant $C_1$ for three different values,
$n=256, 512, 1024$ using DeVore's Binary measurement matrix}
\label{table:table2}

\bc
\resizebox{\columnwidth}{!}{\btab{|c|c|c|c|c|c|c|c|c|c|c|c|}
\hline
$n$                    & $\theta$        & $w$      & $\phi_{50}$      & $n$                    & $\theta$      & $w$       & $\phi_{50}$      & $n$                     & $\theta$       & $w$       & $\phi_{50}$      \\ \hline
\multirow{9}{*}{256} & 0.19     & 0.16   & 0.2    & \multirow{9}{*}{512} & 0.16   & 0.11    & 0.2    & \multirow{9}{*}{1024} & 0.12    & 0.083   & 0.18   \\ \cline{2-4} \cline{6-8} \cline{10-12} 
                     & 0.25     & 0.16   & 0.22   &                      & 0.24   & 0.09    & 0.24   &                       & 0.17    & 0.071   & 0.22   \\ \cline{2-4} \cline{6-8} \cline{10-12} 
                     & 0.32     & 0.14   & 0.31   &                      & 0.33   & 0.095   & 0.3    &                       & 0.25    & 0.09    & 0.25   \\ \cline{2-4} \cline{6-8} \cline{10-12} 
                     & 0.47     & 0.16   & 0.36   &                      & 0.5    & 0.11    & 0.4    &                       & 0.28    & 0.073   & 0.27   \\ \cline{2-4} \cline{6-8} \cline{10-12} 
                     & 0.66     & 0.17   & 0.47   &                      & 0.57   & 0.11    & 0.43   &                       & 0.35    & 0.072   & 0.31   \\ \cline{2-4} \cline{6-8} \cline{10-12} 
                     & -        & -      & -      &                      & 0.71   & 0.15    & 0.52   &                       & 0.52    & 0.08    & 0.41   \\ \cline{2-4} \cline{6-8} \cline{10-12} 
                     & -        & -      & -      &                      & -      & -       & -      &                       & 0.61    & 0.11    & 0.5    \\ \cline{2-4} \cline{6-8} \cline{10-12} 
                     & -        & -      & -      &                      & -      & -       & -      &                       & 0.82    & 0.12    & 0.66   \\ \cline{2-4} \cline{6-8} \cline{10-12} 
                     & -        & -      & -      &                      & -      & -       & -      &                       & 0.94    & 0.17    & 0.9    \\ \hline
$\bar{w}$                   & \multicolumn{3}{c|}{0.16}  & \multirow{3}{*}{}    & \multicolumn{3}{c|}{0.11} & \multirow{3}{*}{}     & \multicolumn{3}{c|}{0.097} \\ \cline{1-4} \cline{6-8} \cline{10-12} 
$C_1$                   & \multicolumn{3}{c|}{2.56} &                      & \multicolumn{3}{c|}{2.53}  &                       & \multicolumn{3}{c|}{3.104} \\ \cline{1-4} \cline{6-8} \cline{10-12} 
\end{tabular}}
\ec
\end{table}

\section{Discussion}\label{sec:Disc}

In this paper we have built upon previously proven sufficient
conditions for \textit{stable} $k$-sparse recovery and showed that they
actually guarantee \textit{robust} $k$-sparse recovery, that is, enable
basis pursuit to achieve $k$-sparse recovery in the presence of measurement noise.
We then derived a
\textit{universal lower bound} on the number of measurements in order
for binary matrix to satisfy this sufficient condition.
Ideally, we would like to prove a universal \textit{necessary} condition
along the following lines:
If a left-regular binary measurement matrix $A$ achieves robust $k$-sparse
recovery of order $k$,
then $d_L \geq \phi(k)$, where $\phi(\cdot)$ is some function
that is waiting to be discovered.
In such a case, the bounds in Theorem \ref{thm:2} would truly be universal.
At present, there are no known universal necessary conditions for binary
measurement matrices, other than Theorem \ref{thm:12}, which is applicable
to \textit{all} matrices, not just binary matrices.

Note that, as shown in \cite[Problem 13.6]{FR13}, a binary matrix
does not satisfy the RIP of order $k$ with constant $\d$ unless
\bd
m \geq \min \left\{ \frac{1-\d}{1+\d} n , \left( \frac{1-\d}{1+\d} \right)^2
k^2 \right\} .
\ed
This negative result has often been used to suggest that binary matrices
are not suitable for compressed sensing.
However, RIP is only a \textit{sufficient} condition for robust sparse
recovery and as shown here, it is possible to provide far weaker
sufficient conditions for robust sparse recovery in terms of the RNSP,
when the measurement matrix is binary.
This is consistent with the results of \cite{MV-Ranjan19},
which show that RIP implies RNSP.
Hence any sufficient condition that is derived using the RIP can also be 
derived using the RNSP.
The present paper goes farther by deriving a sufficient condition 
based on the RNSP that is strictly weaker than the best available
condition based on the RIP.

Moreover, it is possible to compare the sample complexities implied by
\eqref{eq:03} for random Gaussian matrices with those corresponding
to the DeVore class and the array code class, to see that when $n < 10^5$
and $k < \sqrt{n}$, in fact binary matrices require fewer measurements,
as shown in Table \ref{table:Bounds}.

\begin{table*}[]
\caption{Comparison of the number of measurements for the DeVore binary
	matrix, the array code binary matrix, and the random Gaussian matrix.
	Note that $m_D = q_D^2$ and $m_A = (k+1)q_A$.
	The quantity $m_G$ is computed according to \eqref{eq:03}.}
	{\begin{tabular}{|c|c|c|c|c|c|c||c|c|c|c|c|c|c||c|c|c|c|c|c|c|}
\hline
\textbf{$n$}           & \textbf{$k$} & \textbf{$q_D$} & \textbf{$m_D$} & \textbf{$q_A$}         & \textbf{$m_A$} & \textbf{$m_G$} & \textbf{$n$}                             & \textbf{$k$} & \textbf{$q_D$} & \textbf{$m_D$} & \textbf{$q_A$}          & \textbf{$m_A$} & \textbf{$m_G$} & \textbf{$n$}                             & \textbf{$k$} & \textbf{$q_D$} & \textbf{$m_D$} & \textbf{$q_A$}          & \textbf{$m_A$} & \textbf{$m_G$} \\ \hline
\multirow{4}{*}{900} & 5          & 11          & 121         & \multirow{4}{*}{31} & 186         & 4,467       & \multirow{4}{*}{$10^4$} & 20         & 47          & 2,209       & \multirow{4}{*}{101} & 2,121       & 14,436      & \multirow{4}{*}{$10^5$} & 50         & 101         & 10,201      & \multirow{4}{*}{317} & 16,167      & 39,165      \\ \cline{2-4} \cline{6-7} \cline{9-11} \cline{13-14} \cline{16-18} \cline{20-21} 
                     & 10         & 23          & 529         &                     & 341         & 6,682       &                                        & 40         & 83          & 6,889       &                      & 4,141       & 25,430      &                                        & 100        & 211         & 44,521      &                      & 32,017      & 71,878      \\ \cline{2-4} \cline{6-7} \cline{9-11} \cline{13-14} \cline{16-18} \cline{20-21} 
                     & 15         & 31          & 961         &                     & 496         & 8,982       &                                        & 60         & 127         & 16,129      &                      & 6,161       & 35,600      &                                        & 150        & 307         & 94,249      &                      & 47,867      & 102,604     \\ \cline{2-4} \cline{6-7} \cline{9-11} \cline{13-14} \cline{16-18} \cline{20-21} 
                     & 20         & 41          & 1,681       &                     & 651         & 10,863      &                                        & 80         & 163         & 26,569      &                      & 8,181       & 45,232      &                                        & 200        & 401         & 160,801     &                      & 63,717      & 132,030     \\ \hline
\end{tabular}}
\label{table:Bounds}
\end{table*}

One might argue that the bound in \eqref{eq:03} is only a \textit{sufficient}
condition for the number of measurements and that in actual examples,
far fewer measurements suffice.
This is precisely the motivation behind studying the phase transition
of basis pursuit with binary matrices.
As shown in Section \ref{ssec:num2}, in fact there is no difference
between the phase transition behavior of random Gaussian matrices
and binary matrices.
This observation reinforces earlier observations in \cite{MJGD13}.
In other words, the fraction of randomly generated $k$-sparse
vectors that can be recovered using $m$ measurements is the same
whether one uses Gaussian matrices or binary matrices.
Given that basis pursuit can be implemented much more efficiently
with binary measurement matrices than with random Gaussian matrices
and both classes of matrices exhibit similar phase transition properties,
there appears to be a very strong case for preferring binary measurement
matrices over random Gaussian matrices, notwithstanding the ``order-optimality''
of the latter class.
In this connection, it would be worthwhile to explore whether
other classes of measurements also exhibit phase transition behavior
that is quantitatively similar to that of Gaussian and binary matrices.

There is one final point that we wish to make.
Theorem \ref{thm:3} suggests that, in order to use binary matrices for
compressed sensing, it is better to use graphs with \textit{small} girth,
in fact, of girth six.
This runs counter to the intuition in LDPC decoding, where one wishes
to design binary matrices with \textit{large} girth.
Indeed, in \cite{KSDH11}, the authors build on an earlier paper
\cite{Arora-et-al09} and develop a message-passing type of decoder
that achieves order-optimality using a binary matrix.
The binary matrices that are used in \cite{KSDH11} all have \textit{large}
girth $\OM(\ln n)$, which is the theoretical upper bound.
One possible explanation for this discrepancy
is that the model for compressed sensing
using in \cite{KSDH11} is different from the one used here and in most of
the compressed sensing literature.
Specifically (to paraphrase a little bit), in \cite{KSDH11} in the unknown
vector, each component is binary and the probability that the component 
equals one is $k/n$.
Thus, the \textit{expected value} of nonzero bits is $k$, but it could
be larger or smaller.
Accordingly, the actual sparsity count is a random number that could exceed $k$.
The recovery results proved in \cite{KSDH11} are also probabilistic in nature.
It is worth further study to determine whether this difference is sufficient
to explain why, in compressed sensing, graphs of low girth are to be preferred.

\section*{Acknowledgement}

The authors thank Prof.\ David Donoho of Stanford
University for his helpful suggestions on phase transitions and
for providing the code to enable us to reproduce his computational results.
They also thank Prof.\ Phanindra Jampana of IIT Hyderabad for helpful
discussions on the construction of Euler squares.
Finally, they thank the reviewers for their careful reading of the previous
draft and for detailed comments that have greatly improved the readability
of the paper.

\bibliographystyle{IEEEtran}
\bibliography{Comp-Sens}

\end{document}